\def\endthebibliography{%
	\def\@noitemerr{\@latex@warning{Empty `thebibliography' environment}}%
	\endlist
}
\newcommand{\executeiffilenewer}[3]{%
	\ifnum\pdfstrcmp{\pdffilemoddate{#1}}%
	{\pdffilemoddate{#2}}>0%
	{\immediate\write18{#3}}\fi%
}
\newcommand{%
	\executeiffilenewer{.svg}{.pdf}%
	{inkscape -z -D --file=/Users/tobi/Documents/pubs/schoels-iros-2019/.svg %
		--export-pdf=/Users/tobi/Documents/pubs/schoels-iros-2019/.pdf --export-latex}%
	\input{.pdf_tex}%
}[1]{%
	\executeiffilenewer{#1.svg}{#1.pdf}%
	{inkscape -z -D --file=/Users/tobi/Documents/pubs/schoels-iros-2019/#1.svg %
		--export-pdf=/Users/tobi/Documents/pubs/schoels-iros-2019/#1.pdf --export-latex}%
	\input{#1.pdf_tex}%
}
\newcommand{\priority}[2][1]{%
	\ifnum#1<10
		\ifnum#1<1
		\else%
		#2%
		\fi%
	\else%
	#2%
	\fi%
}
\renewcommand{\vec}[1]{\mathbf{ #1 }}
\newcommand{\set}[1]{\mathcal{ #1 }}
\newcommand*{\bmatr}[1]{\begin{bma`x} #1 \end{bmatrix}}
\newcommand*{\matr}[1]{\begin{matrix} #1 \end{matrix}}
\newcommand{\ub}[1]{\overline{ #1 }}
\newcommand{\lb}[1]{\underline{ #1 }}
\newcommand{\R}{\mathbb{R}}
\newcommand{\T}{^\top}
\newcommand{\norm}[1]{\left\lVert #1 \right\rVert}
\newcommand{\state}{\vec{x}}
\newcommand{\controls}{\vec{u}}
\newcommand{\inistate}{\overline{\underline{\state}}_0}
\newcommand{\nstate}{{n_{\mathrm{\state}}}}
\newcommand{\ncontrols}{{n_{\mathrm{\controls}}}}
\newcommand{\refstate}{\hat{\state}}
\newcommand{\refcontrols}{\hat{\controls}}
\newcommand{\occupied}{\set{O}}
\newcommand{\dist}{d_\occupied}
\newcommand{\dmin}{\lb{d}}
\newcommand{\dmax}{\ub{d}}
\newcommand{\workspace}{\set{W}}
\newcommand{\admissible}{\set{A}}
\newcommand{\pos}{\vec{p}}
\newcommand{\cen}{\vec{c}}
\newtheorem{definition}{Definition}
\newtheorem{lemma}{Lemma}
\title{\LARGE \bf
An NMPC Approach using Convex Inner Approximations for\\Online Motion Planning with Guaranteed Collision Avoidance
}
\author{Tobias Schoels\textsuperscript{1,2}, Luigi Palmieri\textsuperscript{2}, Kai O. Arras\textsuperscript{2}, and Moritz Diehl\textsuperscript{1}
	\thanks{\textsuperscript{1}T.~Schoels and M.~Diehl are with the Department of Microsystems Engineering, University of Freiburg.
		\texttt{\{tobias.schoels, moritz.diehl\}@imtek.uni-freiburg.de}.}
	\thanks{\textsuperscript{2}T.~Schoels, L.~Palmieri and K.~O.~Arras are with Robert Bosch GmbH, Corporate Research, Stuttgart, Germany.
		\texttt{\{tobias.schoels, luigi.palmieri, kaioliver.arras\}@de.bosch.com}.}
	\thanks{This research was supported by the German Federal Ministry for Economic Affairs and Energy (BMWi) via eco4wind (0324125B) and DyConPV (0324166B), by DFG via Research Unit FOR 2401, and the EU’s Horizon
		2020 research and innovation program under grant agreement No 732737
		(ILIAD).}
}
\begin{document}

	\maketitle
	\thispagestyle{empty}
	\pagestyle{empty}

	\begin{abstract}
		Even though mobile robots have been around for
		decades, trajectory optimization and continuous time collision
		avoidance remain subject of active research. Existing methods
		trade off between path quality, computational complexity, and
		kinodynamic feasibility. This work approaches the problem using
		a nonlinear model predictive control (NMPC) framework, that is based
		on a novel convex inner approximation of the collision avoidance
		constraint. The proposed Convex Inner ApprOximation (CIAO)
		method finds kinodynamically feasible and continuous time collision
		free trajectories, in few iterations, typically one.
		For a feasible initialization, the approach is guaranteed to
		find a feasible solution, i.e. it preserves feasibility. Our
		experimental evaluation shows that CIAO outperforms
		state of the art baselines in terms of planning efficiency and
		path quality. Experiments on a robot with 12 states show
		that it also scales to high-dimensional systems. Furthermore
		real-world experiments demonstrate its capability of unifying
		trajectory optimization and tracking for safe motion planning
		in dynamic environments.
	\end{abstract}

	\section{INTRODUCTION}
	Several existing mobile robotics applications (e.g. intra-logistic and service robotics) require robots to operate in dynamic environments among other agents, such as humans or other autonomous systems. In these scenarios, the reactive avoidance of unforeseen dynamic obstacles is an important requirement.
	Combined with the objective of reaching optimal robot behavior, this poses
	a major challenge for motion planning and control and remains subject of
	active research.

	Recently several researchers have tackled the obstacle avoidance problem by formulating and solving optimization problems \cite{Quinlan1993, Brock2002, Zucker2013, Schulman2014, Herbert2017, Zhang2017, Bonalli2019, Verscheure2009, Zhu2015, Roesmann2017,  Frasch2013b, Liniger2015, Faulwasser2016, Neunert2016}.
	This approach is well suited for finding locally optimal solutions, but generally gives no guarantee of finding the global optimum.
	Most methods therefore rely on the initialization by an asymptotically optimal sampling-based planner \cite{Karaman2011,palmieri2014novel,palmieri2016rrt}.
	A shortcoming of most common trajectory optimization methods is that they
	are incapable of respecting kinodynamic constraints, e.g. bounds on the
	acceleration, and typically lack a notion of time in their predictions,
	\cite{Quinlan1993, Brock2002, Zucker2013, Schulman2014}.
	These approaches are typically limited to the optimization of paths rather than trajectories and impose constraints by introducing penalties.

	The increase of computing power and the availability of fast numerical solvers, as discussed in \cite{Kouzoupis2018}, has given rise to \ac{mpc} based approaches, e.g. \cite{ 
		Zucker2013, Schulman2014, Zhang2017, Herbert2017, Bonalli2019
	}.
	In this framework, an \ac{ocp} is solved in every iteration. These methods
	succeed in finding kinodynamically \cite{Zhang2017, Bonalli2019,
	Herbert2017} or kinematically \cite{Zucker2013, Schulman2014, Herbert2017}
	feasible trajectories, but typically use penalty terms in the cost function that offer no safety guarantees \cite{Bonalli2019} or
	require that obstacles are given as a set of convex hulls
	\cite{Schulman2014}. 
	\begin{figure}
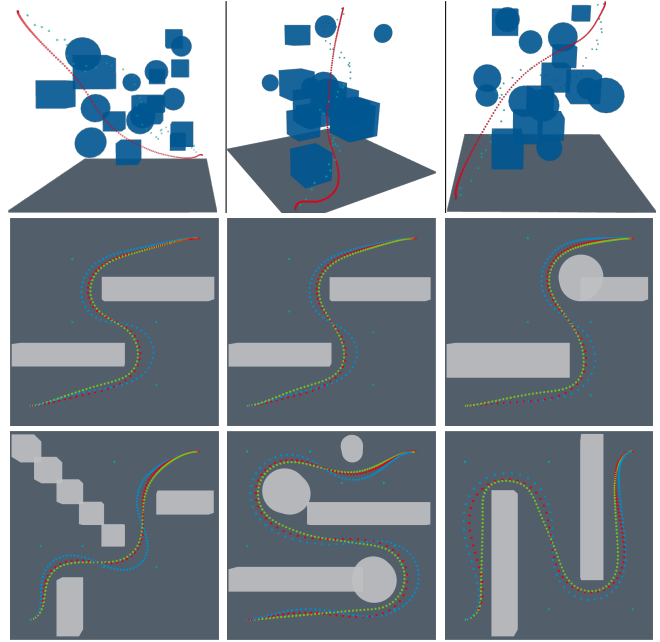

		\centering
		\includegraphics[width=0.32\linewidth]{figures/3D-example1.png} \vline \hfil
		\includegraphics[trim={0mm 3mm 0mm 0mm}, clip, width=0.32\linewidth]{figures/3D-example2a.png} \vline \hfil
		\includegraphics[width=0.32\linewidth]{figures/3D-example4.png} \\[2pt]
		\includegraphics[width=0.32\linewidth]{figures/2D-example4.png}
		\includegraphics[width=0.32\linewidth]{figures/2D-example5.png}
		\includegraphics[width=0.32\linewidth]{figures/2D-example6.png} \\[2pt]
		\includegraphics[width=0.32\linewidth]{figures/2D-example7.png}
		\includegraphics[width=0.32\linewidth]{figures/2D-example8.png}
		\includegraphics[width=0.32\linewidth]{figures/2D-example9.png}

		\caption{\ac{sciam} trajectories for the Astrobee robot (top row) in red and for a unicycle robot (last two rows) with three different maximum speeds $v_{\max}$ and corresponding minimum distances $\dmin$ (see \eqref{eq:ctime_collision_freedom}): green - slow, red - normal, blue - fast. The boxes and spheres represent obstacles, the turquoise dots the reference path. A wider spacing between the dots indicates a higher speed. The start is always located in the bottom and the goal in the top.
			It is clearly visible that \ac{sciam} maintains higher distances to obstacles for higher speeds.}
		\label{fig:benchmarkarena}
	\end{figure}%
	%

	\paragraph*{Contribution}
  	This work presents \ac{sciam}, a \ac{nmpc} based approach to real-time collision avoidance for single body robots.
	It preserves feasibility across iterations and uses a novel, convex formulation of the collision avoidance constraint
	that is compatible with many implementations of the distance function, even discrete ones like distance fields.
	To the best of authors' knowledge, CIAO is the first real-time capable \ac{nmpc} approach that
	guarantees continuous time collision free trajectories and is agnostic of the distance function's implementation.
	The method's efficacy is demonstrated and evaluated in simulation and real-world using robots with nonlinear, constrained dynamics and state of the art baselines.

	\paragraph*{Structure}
	The paper is structured as follows: The related work is discussed in Section \ref{sec:relatedwork} and Section \ref{sec:problem} introduces the problem we want to solve.
	Section \ref{sec:sciam} details
	\ac{sciam} alongside some considerations on feasibility, safety and
	practical challenges. In Section \ref{sec:motion_planning} we detail how
	\ac{sciam} can be used for trajectory optimization and \ac{rhc}. The
	experiments and results are discussed in Section \ref{sec:experiments}. A
	summary and an outlook is given in Section \ref{sec:conclusion}.
	\section{Related Work}
	\label{sec:relatedwork}
	%
	Trajectory optimization methods try to find time-optimal and collision-free robot trajectories by formulating and solving an optimization problem
	\cite{
		Quinlan1993, Brock2002, Verscheure2009, Zucker2013, Schulman2014, Herbert2017, Zhang2017, Roesmann2017, Bonalli2019, Zhu2015, Frasch2013b, Liniger2015, Faulwasser2016, Neunert2016
	}. 
	Classical approaches to obstacle avoidance include \cite{Borenstein1991, Fox1997, Ko1998, Fiorini1998, Minguez2004}. 
	These approaches do neither produce optimal trajectories, nor unify planning and control, nor account for complex robot dynamics.

	A simple and effective method that is still used in practice, is the elastic-band algorithm \cite{Quinlan1993}.
	The computed paths, however, are generally non-smooth, i.e. they are not guaranteed to satisfy kinodynamic constraints, nor does this algorithm compute a velocity profile.
	Like Zhu et al. \cite{Zhu2015}, our approach aims to fix this shortcoming,
	while building up on the notion of (circular) free regions. Instead of
	optimizing velocity profile and path length separately, as done in
	\cite{Zhu2015,Quinlan1993}, we optimize them jointly utilizing an \acf{nmpc}
  setup. Also R{\"o}smann et al. \cite{Roesmann2017} provide an approach which
	combines elastic-band with an optimization algorithm. Contrarily to ours,
	their approach does not enforce obstacle avoidance as constraint, and
	requires a further controller for trajectory tracking.

	Nowadays optimization based methods receive increasingly more attention.
	Popular methods include CHOMP \cite{Zucker2013}, TrajOpt \cite{Schulman2014}, OBCA \cite{Zhang2017}, and GuSTO \cite{Bonalli2019}, which have been shown to produce smooth trajectories efficiently. They typically use a simplified system model to compute a path from the current to the goal state (or set), and further need an additional controller to steer the system along the precomputed path.
	The proposed method, \ac{sciam}, is \ac{mpc}-based and provides algorithms for both trajectory optimization and \ac{rhc}, simultaneously controlling the robot and optimizing its trajectory.

	Also Neunert et al. \cite{Neunert2016} propose \ac{rhc} to unify trajectory optimization and tracking, but their approach does not include a strategy for obstacle avoidance. They propose to solve an unconstrained \ac{nlp} online, \ac{sciam} on the other hand is solving a constrained \ac{nlp} online that enforces collision avoidance as constraint.

	Frasch et al. \cite{Frasch2013b} propose an \ac{mpc} with box-constraints to model obstacles and road boundaries. Liniger et al. \cite{Liniger2015} handle obstacles in a similar way, but apply contouring control, i.e. the approach steers a race car in a corridor around a predefined path.
	These frameworks do not consider arbitrarily placed obstacles, particularly
	no moving obstacles, which makes them unsuitable for many applications. In
	our approach we handle more complex obstacle definitions, modeled according
	to a generic nonlinear and nonconvex distance function. To this end, we
	propose a novel constraint formulation that is shown to be a convex inner
	approximation of the actual collision avoidance constraint.

  Herbert et al. \cite{Herbert2017} propose a hybrid approach to safely avoid dynamic
	obstacles. The trajectory tracker does not consider the
	obstacles explicitly, but relies on the planning layer.
  The method proposed in this work presents a unified approach for all robot motion planing,
	control, and obstacle avoidance, using constrained \ac{nmpc}.
	The recently proposed method GuSTO \cite{Bonalli2019} uses \ac{scp}, like other common algorithms, e.g. \cite{Schulman2014, Liniger2015, Frasch2013b, Roesmann2017}.
	\ac{scp} requires a full convexification of the originally nonlinear and nonconvex
	trajectory optimization problem.
	This is accomplished by linearizing the
	system model and incorporating paths constraints,
	including collision avoidance, as penalties in the objective function. In
	general these approximations may lead to infeasible, i.e. colliding or kinodynamically
	intractable trajectories. Typically several \ac{scp} iterations are required
	to find a feasible solution.
	\ac{sciam}, on the other hand, solves partially convexified \acp{nlp}, using a convex inner approximation of the
	collision avoidance constraint, and finds feasible solutions in less iterations, typically one.
	The individual iterations are computationally cheaper and feasibility is preserved.
	Since the dynamical model is accounted for by the \ac{nlp}-solver, linearization errors are minimized.
	%
%
	%

	Finally \ac{sciam} can be considered as a trust region method \cite{Yuan2015}, where the nonlinear state constraints are approximated with a convex inner approximation.
	\section{PROBLEM FORMULATION}\label{sec:problem}

	We want to find a kinodynamically feasible, collision free trajectory by formulating and solving a constrained \ac{ocp}.
	Kinodynamic feasibility is ensured by using a dynamical model to simulate the robot's behavior and collision avoidance is achieved constraining the robot to positions with a \emph{minimum distance} $\dmin$ to all obstacles.

	The \emph{occupied set} $\occupied$ is defined as the set of points in the robot's workspace $\workspace\subseteq\R^n$ that are occupied by obstacles.
	The Euclidean distance to the closest obstacle for any point $\pos \in \workspace$ is given by the \emph{distance
	function} $\dist: \workspace
	\to \R$:
	\begin{equation}\label{eq:dist}
		\dist(\pos) = d(\pos; \occupied)= \underset{\mathbf{o} \in \occupied}{\min}  \norm{\pos - \mathbf{o} }_2.
	\end{equation}
	For the sake of simplicity we assume that the robot's shape is contained in an $n$-dimensional sphere with radius $< \dmin$ and center point $\pos$.
	Collision avoidance can now be achieved by requesting that
	the distance function at the robot's position $\pos \in \workspace$ is at least the minimum distance $\dmin$:
	\begin{equation}\label{eq:min_dist}
		\dist(\pos) \geq \dmin,
	\end{equation}
	which is equivalent to $\| \pos - \mathbf{o} \|_2 \geq \dmin, \forall\, \mathbf{o} \in \occupied$.

	We assume $\dmin > 0$ to be fixed from now on, to ensure that points in the occupied set $\occupied$ do not satisfy \eqref{eq:min_dist}. A point $\pos \in \workspace$ that satisfies \eqref{eq:min_dist} is called `\emph{free}'.
	\priority{Further we define the \emph{safety margin} as the area for which $0 < \dist(\pos) <  \dmin $ holds.}

	We can now formulate an \ac{ocp} that enforces collision avoidance as path constraint:
	\begin{mini}
		{\state(\cdot), \controls(\cdot)}
		{\int_0^T l(\state(t), \controls(t), \mathbf{r}(t))\; dt \; + l_\mathrm{T}(\state(T), \mathbf{r}(T))} 
		{\label{eq:ocp}} 
		{} 
		\addConstraint{\state(0)}{= \inistate}{}
		\addConstraint{\state(T)}{\in \mathbb{X}_\mathrm{T}}{}
		\addConstraint{\dot{\state}(t)}{= f(\state(t), \controls(t)),}{\quad t \in [0, T]}
		\addConstraint{h(\state(t), \controls(t))}{\leq 0,}{\quad t \in [0, T]}
		\addConstraint{\dist(\pos(t))}{\geq \dmin,}{\quad t \in [0, T],}
	\end{mini}
	where $\state(\cdot): \R \rightarrow \R^{\nstate}$ denotes the robot's state, $\controls(\cdot): \R \rightarrow \R^{\ncontrols}$ is the vector of controls, $\mathbf{r}(\cdot):\R\to\R^{\nstate+\ncontrols}$ provides reference states and controls, $T$ is the length of the horizon in seconds, the function $l(\cdot)$ denotes the cost at time point $t$ and $l_\mathrm{T}(\cdot)$ the terminal cost, $\inistate$ is robot's current state, and $\mathbb{X}_\mathrm{T} \subseteq \R^{\nstate}$ is the set of admissible terminal states.
	We use the common shorthand $\dot{\state}$ to denote the derivative with respect to time, i.e. $\dot{\state} = \frac{\partial\, \state}{\partial t}$.
  The function $f$ models the robot's dynamics and $h$ implements a set of path constraints, e.g. physical limitations of the system, and $\pos(t) = S_\pos\cdot\state(t) \in \workspace$ denotes the robot's position
	with a selector matrix $S_\pos$ chosen accordingly.

	\section{CONVEX INNER APPROXIMATION (CIAO)} \label{sec:sciam}
	In this section we describe how we solve the \ac{ocp} presented in \eqref{eq:ocp} by adopting a convex inner approximation of the actual collision avoidance constraint presented in Sec.~\ref{sec:problem}.

	First we discretize \eqref{eq:ocp} using a direct multiple shooting scheme as proposed by \cite{Bock1984}. 
	The resulting \ac{nlp} is a function of $\vec{r}$, $\inistate$, and the
	sampling time $\Delta t$. Using the shorthand $\state_k = \state(k \cdot
	\Delta t),~k \in \mathbb{Z}$ for cleaner notation, we discretize \eqref{eq:ocp}
	as:\vspace{-1em}
	\begin{mini!}
		{\vec{w}}{J(\vec{w}, \vec{r})}
		{\label{eq:nlp_init}}{}
		\addConstraint{\state_0 -  \inistate}{=0}
		\addConstraint{\state_N}{\in \mathbb{X}_\mathrm{T}}{}
		\addConstraint{\state_{k+1} - F(\state_k, \controls_k; \Delta t)}{=0,}{\quad k=0,\ldots,N-1}
		\addConstraint{h(\state_k, \controls_k)}{\leq 0,}{\quad k=0,\ldots,N}
		\addConstraint{\dist(\pos_k)}{\geq\dmin, \label{eq:nlp_cac}}{\quad k=0,\ldots,N,}
	\end{mini!}
	where $\vec{w} = [\state_0\T, \controls_0\T, \ldots, \controls_{N-1}\T, \state_N\T]\T \in \R^{n_\vec{w}}$ is a vector of optimization variables that contains the stacked controls and states for all $N$ steps in the horizon, similarly $\vec{r}$ contains the reference states and controls, $J(\vec{w}, \vec{r}) = \sum_{k=0}^{N-1} l(\state_k, \controls_k, \vec{r}_k) + l_\mathrm{T}(\state_N, \vec{r}_N)$ is the discretized objective, with stage cost $l$ and terminal cost $l_\mathrm{T}$, and $F$ models the discretized robot dynamics. 
	We denote the feasible set of \eqref{eq:nlp_init} by $\set{F}_\eqref{eq:nlp_init} \subset \R^{n_\vec{w}}$.

	\subsection{Free Balls: A Convex Inner Approximation of the Obstacle Avoidance Constraint }
	\label{sec:cfr}
	\begin{figure}
		\centering
		\vspace{0.2cm}
		\begin{subfigure}[t]{0.45\columnwidth}
			\centering
			\def\svgwidth{0.7\columnwidth}
	\executeiffilenewer{figures/cfr.svg}{figures/cfr.pdf}%
	{inkscape -z -D --file=/Users/tobi/Documents/pubs/schoels-iros-2019/figures/cfr.svg %
		--export-pdf=/Users/tobi/Documents/pubs/schoels-iros-2019/figures/cfr.pdf --export-latex}%
	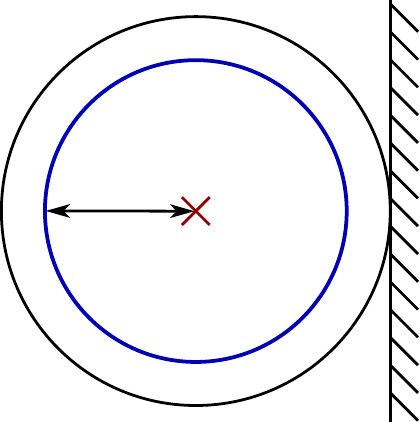%

			\caption[\ac{cfr}]{Example of a \acl{cfr} around the center $\cen$ marked by the red cross for a 2D environment. It is also the center of the circles, the black circle has radius $\dist(\cen)$, and the blue circle radius $\dist(\cen) - \dmin$.}
			\label{fig:cfr}
		\end{subfigure}
		\hspace{1em}
		\begin{subfigure}[t]{0.45\columnwidth}
			\centering
			\def\svgwidth{0.7\columnwidth}
	\executeiffilenewer{figures/cfr_constraint.svg}{figures/cfr_constraint.pdf}%
	{inkscape -z -D --file=/Users/tobi/Documents/pubs/schoels-iros-2019/figures/cfr_constraint.svg %
		--export-pdf=/Users/tobi/Documents/pubs/schoels-iros-2019/figures/cfr_constraint.pdf --export-latex}%
	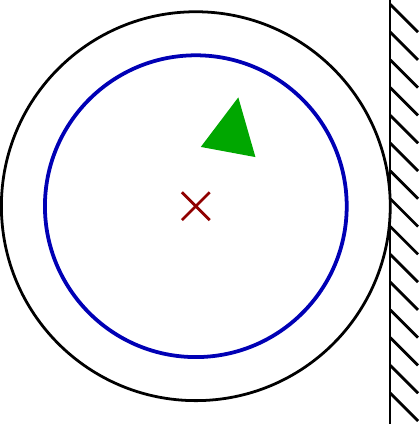%

			\caption[\ac{cfr} Constraint]{Example for \acl{cfr} constraint. The green arrow head depicts the robot's current position $\pos$, the orange dot the closest obstacle $\mathbf{o}$, the circles, and the red cross are identical to the ones in \ref{fig:cfr}.} 
			\label{fig:cfr_constraint}
		\end{subfigure}
		\caption{The left figure illustrates the \acl{cfr} concept, the right shows how it can be used as a constraint.}
		\label{fig:cfr_both}
	\end{figure}
	The actual obstacle avoidance constraint formulated in Eq.~\eqref{eq:min_dist} is generally nonconvex and nonlinear, which makes it ill-suited for rapid optimization. We propose a convex inner approximation of the constraint that is based on the notion of \acfp{cfr}, as proposed in \cite{Quinlan1993} and extended by \cite{Zhu2015}.
	For cleaner notation we first define the \emph{free set}.

	\begin{definition}\label{def:admissible_set}
		Let $\occupied$ be the occupied set and $\dmin > 0$ be the minimum distance,
		then the free set $\admissible$ is defined as\par
		$
		\hfil\admissible = \{a \in \workspace : \| \mathbf{a} - \mathbf{o} \|_2 \geq \dmin \quad \forall \; \mathbf{o} \in \occupied \}.
		$
	\end{definition}
	\textit{Remark:}
	This definition implies that the free set $\admissible$ and the occupied set $\occupied$ are disjunct, i.e. $\admissible \cap \occupied = \emptyset$.

	We can now formulate an obstacle avoidance constraint by enforcing that the robot's position lies within an $n$-dimensional ball formed around $\cen \in \admissible$ as shown in Fig.~\ref{fig:cfr_both}.
	\begin{definition}\label{def:cfr}
		For an arbitrary free point $\cen \in \admissible$ we define the \textit{\acl{cfr}} as 
		$ \admissible_{\cen} := \left\{ \pos \in \workspace : \| \pos - \cen \|_2 \leq \dist(\cen) - \dmin \right\}.$
	\end{definition}

	We will now show that a \acl{cfr} is a convex subset of the free set.

	\begin{lemma} \label{lem:inner_approximation}
		Let $\cen \in \admissible$ be a free point, then the \acl{cfr} $\admissible_{\cen}$ is a convex subset of $\admissible$, i.e. $\cen \in \admissible \Rightarrow \admissible_{\cen} \subseteq \admissible$.

	\end{lemma}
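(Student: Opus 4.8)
The plan is to establish the two assertions of Lemma~\ref{lem:inner_approximation} in turn: that $\admissible_{\cen}$ is convex, and that $\admissible_{\cen}\subseteq\admissible$. Before doing either I would record the elementary but essential fact that, because $\cen\in\admissible$, Definition~\ref{def:admissible_set} gives $\dist(\cen)\geq\dmin$, so the radius $\dist(\cen)-\dmin$ appearing in Definition~\ref{def:cfr} is nonnegative and $\admissible_{\cen}$ is a genuine, nonempty ball (it contains $\cen$) rather than an empty or ill-defined set. This also guarantees the subsequent estimates are not vacuous.

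Convexity is the easy part. The set $\admissible_{\cen}$ is the sublevel set $\{\pos : \|\pos-\cen\|_2 \leq \dist(\cen)-\dmin\}$ of the map $\pos\mapsto\|\pos-\cen\|_2$, intersected with the workspace $\workspace$. Since the Euclidean norm is a convex function, this sublevel set is a closed Euclidean ball and hence convex; taking $\workspace$ convex (e.g.\ $\workspace=\R^n$), the intersection remains convex. A one-line appeal to convexity of norms therefore settles this claim.

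The substantive part is the inclusion, and the main obstacle is proving it using only the definition of $\dist$ as an infimum of distances, with no structural assumption on $\occupied$. The tool I would use is that $\dist$ is $1$-Lipschitz with respect to the Euclidean norm. To see this, fix any $\pos\in\workspace$ and any obstacle point $\mathbf{o}\in\occupied$; the triangle inequality gives $\|\cen-\mathbf{o}\|_2 \leq \|\cen-\pos\|_2 + \|\pos-\mathbf{o}\|_2$. Taking the infimum over $\mathbf{o}\in\occupied$ on both sides, where the term $\|\cen-\pos\|_2$ is constant in $\mathbf{o}$, yields $\dist(\cen)\leq\|\cen-\pos\|_2 + \dist(\pos)$, i.e.\ $\dist(\pos)\geq\dist(\cen)-\|\pos-\cen\|_2$. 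This is precisely the one-sided Lipschitz bound I need, and the interchange of the inequality with the infimum is the only delicate step in the whole argument.

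With this in hand the inclusion is immediate. For an arbitrary $\pos\in\admissible_{\cen}$ I have $\|\pos-\cen\|_2 \leq \dist(\cen)-\dmin$ by Definition~\ref{def:cfr}, so the Lipschitz bound gives $\dist(\pos)\geq\dist(\cen)-\|\pos-\cen\|_2 \geq \dist(\cen)-(\dist(\cen)-\dmin)=\dmin$. Hence $\dist(\pos)\geq\dmin$, which by Definition~\ref{def:admissible_set} and the equivalence noted after \eqref{eq:min_dist} means $\pos\in\admissible$. Since $\pos$ was arbitrary, $\admissible_{\cen}\subseteq\admissible$, completing the proof. Everything apart from the Lipschitz step is routine.
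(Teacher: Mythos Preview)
Your proof is correct and takes essentially the same approach as the paper: both establish convexity by noting that $\admissible_{\cen}$ is a norm ball, and both obtain the inclusion from the triangle inequality applied to $\|\cen-\mathbf{o}\|_2$. The only difference is presentational---you give a direct argument via the $1$-Lipschitz property of $\dist$, whereas the paper phrases the same estimate as a proof by contradiction (assuming some $\pos\in\admissible_{\cen}$ lies within $\dmin$ of an obstacle and deriving $\|\pos-\cen\|_2>\dist(\cen)-\dmin$).
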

	\begin{proof}
		We will prove this lemma in two steps. First, we observe that the \acl{cfr} is a norm ball and therefore convex. Second, we show by contradiction that $\admissible_{\cen} \nsubseteq \admissible \Rightarrow \cen \notin \admissible$.

		Suppose $\exists\; \mathbf{o} \in \occupied$ and $\pos \in \admissible_{\cen}$ such that $\norm{\pos - \mathbf{o}}_2 < \dmin$. 
		We now apply the triangle inequality and obtain
		$\norm{\cen - \mathbf{o}}_2 \leq \norm{\pos - \cen}_2 + \norm{\pos - \mathbf{o}}_2 < \norm{\pos - \cen}_2 + \dmin$, see Fig.~\ref{fig:cfr_constraint}. 
		Using the distance function's definition we get $\dist(\cen) < \norm{\pos - \cen}_2 + \dmin$. Reordering yields $\norm{\pos - \cen}_2 > \dist(\cen) - \dmin$, which shows that $\admissible_{\cen} \nsubseteq \admissible$.
	\end{proof}


		Based on Lem.~\ref{lem:inner_approximation} and Def.~\ref{def:cfr} we can approximate the collision avoidance constraint by
		$
		\norm{\pos -\cen}_2 \leq \dist(\cen) - \dmin.
		$ 
		This formulation is not differentiable in $\pos = \cen$ and might pose a problem for gradient based solvers.
		To prevent this case and \ac{licq} violations at the only feasible point we assume
		$
		\dist(\cen) > \dmin.
		$
		This implies that both sides are grater $0$, such that we can square both sides and get
		\begin{equation}\label{eq:cfr_constraint}
		\norm{\pos -\cen}^2_2 \leq (\dist(\cen) - \dmin)^2.
		\end{equation}

	With the constraint formulated in \eqref{eq:cfr_constraint} and assuming
	that \ac{licq} holds for all \acl{cfr} center points
	$\cen_k$ with $k = 0, \ldots , N$, we can partially convexify the \ac{nlp}
	\eqref{eq:nlp_init}. We obtain the \ac{sciam}-\ac{nlp}, which like
	\eqref{eq:nlp_init} depends on $\vec{r}$, $\inistate$, $\Delta t$, and
	additionally the tuple of center points $\mathbf{C} = ( \cen_0, \ldots,
	\cen_N )$:
	%
	%
	\begin{mini!}[0]
		{\vec{w}, \vec{s}}{J(\vec{w}, \vec{r}) + \sum_{k=0}^N \mu_k \cdot s_k}
		{\label{eq:nlp_cfr}}{}
		\addConstraint{\state_0- \inistate=0}{\hspace{0pt}}
		\addConstraint{\state_N \in \mathbb{X}_\mathrm{T}}{\hspace{0pt} \label{eq:nlp_cfr_terminal_constraint}}
		\addConstraint{\state_{k+1}- F(\state_k, \controls_k; \Delta t)=0}{, \quad}{ k=0,\ldots,N-1}
		\addConstraint{h(\state_k, \controls_k) \leq 0}{, \quad}{k=0,\ldots,N}
		\addConstraint{\hspace{-2.5em}\norm{\pos_k -\cen_k}^2_2 \leq (\dist(\cen_k) - \dmin_k)^2 + s_k}{, \;}{k=0,\ldots,N \label{eq:nlp_cfr_constraint}}
		\addConstraint{s_k \geq 0}{, \quad}{k=0,\ldots,N.}
	\end{mini!}
	This reformulation of the actual \ac{nlp} \eqref{eq:nlp_init} is called \acf{sciam}.
	For numerical stability we include slack variables $\vec{s} = [s_0, \ldots,
	s_N]\T \in \R^{N+1}$ that are penalized. A point $\vec{w}$ is only considered
	admissible if all slacks are zero, i.e. $\vec{s} = 0$\priority[2]{\ in a
	vector sense}. To ensure that the slacks are only active for problems, that
	would be infeasible otherwise, the multipliers $\mu_k$ have to be chosen
	sufficiently large, i.e. $\mu_k \gg 1$ for $k=0,\ldots, N$.
	The feasible set for optimization variables $\vec{w}$ of this \ac{nlp} depends on $\mathbf{C}$ and is denoted as $\set{F}_\eqref{eq:nlp_cfr}(\mathbf{C})$\priority[2]{, recall that $\vec{w} = [\state_0\T, \controls_0\T, \ldots, \controls_{N-1}\T, \state_N\T]\T$ and $\pos_k = S_\pos \cdot \state_k$}.

	Note that $\dist(\cen)$ enters the \ac{nlp} as a constant ($\cen$ is a parameter not an optimization variable). Thereby \ac{sciam} is compatible any implementation of the distance function, even discrete ones.

		Note that for a convex objective $J(\cdot)$, a convex terminal set $\mathbb{X}_\mathrm{T}$, affine dynamics $F$, and convex path constraints $h$, the \ac{sciam}-\ac{nlp} \eqref{eq:nlp_cfr} is convex.
		Further note that for a linear-quadratic objective $J(\cdot)$, affine-quadratic path constraints $h$, affine dynamics $F$, and a terminal set $\mathbb{X}_\mathrm{T}$ that can be written as either (i) an affine equality or (ii) an affine-quadratic inequality constraint, \ac{sciam}-\ac{nlp} \eqref{eq:nlp_cfr} is a \ac{qcqp}. If is also convex, it is a convex \ac{qcqp}.
	\begin{lemma}\label{lem:sciam_subset}
		Given $\dist(\cen) > \dmin \; \forall \; \cen \in \mathbf{C} \Rightarrow \set{F}_{\eqref{eq:nlp_cfr}}(\mathbf{C}) \subseteq \set{F}_{\eqref{eq:nlp_init}}$, i.e. each feasible point of the \ac{sciam}-\ac{nlp} \eqref{eq:nlp_cfr} is a feasible point of the original \ac{nlp} \eqref{eq:nlp_init}.
	\end{lemma}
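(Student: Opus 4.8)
The plan is to reduce the claimed inclusion to a per-index comparison of the single constraint in which the two problems differ. First I would note that the initial condition $\state_0 = \inistate$, the terminal constraint $\state_N \in \mathbb{X}_\mathrm{T}$, the discrete dynamics $\state_{k+1} = F(\state_k, \controls_k; \Delta t)$, and the path constraints $h(\state_k, \controls_k) \leq 0$ are literally identical in \eqref{eq:nlp_init} and \eqref{eq:nlp_cfr}. Hence any feasible $\vec{w} \in \set{F}_{\eqref{eq:nlp_cfr}}(\mathbf{C})$ automatically satisfies all of these, and it only remains to show that such a $\vec{w}$ also satisfies the original collision-avoidance constraint \eqref{eq:nlp_cac}, namely $\dist(\pos_k) \geq \dmin$ for every $k = 0, \ldots, N$.

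Second, I would exploit that membership in $\set{F}_{\eqref{eq:nlp_cfr}}(\mathbf{C})$ is defined only for \emph{admissible} points, i.e. those with $\vec{s} = 0$. Fixing an arbitrary index $k$ and substituting $s_k = 0$ into \eqref{eq:nlp_cfr_constraint} leaves $\norm{\pos_k - \cen_k}^2_2 \leq (\dist(\cen_k) - \dmin)^2$. Under the standing assumption $\dist(\cen_k) > \dmin$ the right-hand side is the square of the strictly positive number $\dist(\cen_k) - \dmin$, so both sides are nonnegative and I may take square roots to obtain $\norm{\pos_k - \cen_k}_2 \leq \dist(\cen_k) - \dmin$. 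By Definition~\ref{def:cfr} this is precisely the statement $\pos_k \in \admissible_{\cen_k}$.

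Finally I would invoke Lemma~\ref{lem:inner_approximation}. Since $\dist(\cen_k) > \dmin$ in particular gives $\cen_k \in \admissible$, the lemma yields $\admissible_{\cen_k} \subseteq \admissible$; combined with $\pos_k \in \admissible_{\cen_k}$ this shows $\pos_k \in \admissible$, which by Definition~\ref{def:admissible_set} is equivalent to $\dist(\pos_k) \geq \dmin$. As $k$ was arbitrary, the original collision-avoidance constraint holds for all indices, and together with the shared constraints this proves $\vec{w} \in \set{F}_{\eqref{eq:nlp_init}}$.

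The argument is essentially bookkeeping once Lemma~\ref{lem:inner_approximation} is available, so I do not expect a genuine obstacle. The one step that warrants care is passing from the squared constraint \eqref{eq:nlp_cfr_constraint} back to the unsquared ball condition: this is valid only because the hypothesis $\dist(\cen) > \dmin$ keeps both sides nonnegative. Without it the squaring would not be reversible, and the degenerate case $\pos_k = \cen_k$ with $\dist(\cen_k) \leq \dmin$ could violate the inclusion — which is exactly why the assumption is made explicit in the statement.
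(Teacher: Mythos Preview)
Your proposal is correct and follows essentially the same route as the paper: both arguments observe that the two \acp{nlp} differ only in the collision-avoidance constraint, set $\vec{s}=0$ by the admissibility convention, and then reduce the remaining implication to Lemma~\ref{lem:inner_approximation}. Your write-up is simply more explicit about the square-root step and the role of the hypothesis $\dist(\cen_k) > \dmin$ in justifying it, which the paper leaves implicit.
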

	\begin{proof}
		We observe that \eqref{eq:nlp_init} and \eqref{eq:nlp_cfr} are
		identical except for the collision avoidance constraint
		\eqref{eq:nlp_cac} and \eqref{eq:nlp_cfr_constraint}. As stated above
		$\vec{s}=0$ holds for feasible points, thus the slacks $\vec{s}$
		can be ignored. As shown in Lem.~\ref{lem:inner_approximation}
		\eqref{eq:nlp_cfr_constraint} is a convex inner approximation of
		\eqref{eq:nlp_cac}, therefore $\set{F}_{\eqref{eq:nlp_cfr}}(\mathbf{C})
		\subseteq \set{F}_{\eqref{eq:nlp_init}}$ follows by construction.
	\end{proof}

	\subsection{The \ac{sciam}-iteration}
	We will now introduce the \ac{sciam}-iteration, as detailed in Alg.~\ref{alg:solve_sciam}.
	It takes a two step approach that first formulates the \ac{sciam}-\ac{nlp} \eqref{eq:nlp_cfr} by finding a tuple of center points $\mathbf{C} = ( \cen_0, \ldots, \cen_N )$ before solving it.
	\begin{algorithm}
		\small
		\begin{algorithmic}[1]
			\Function{\ac{sciam}-iteration}{$\vec{w}\, ; \; \vec{r}, \; \inistate, \; \Delta t$}
			\State $\mathbf{C} \gets (\cen_k = S_\pos \cdot \state_k$ for $k =0,\ldots,N)$ \Comment recall $\state_k \in \vec{w}$
			\State $\mathbf{C}^* \gets (\cen^* = \textsc{maximize\acs{cfr}}(\cen)$ for all $\cen \in \mathbf{C}$) \Comment solve \eqref{eq:grow_cfr}
			\State $\vec{w}^* \gets$ \textsc{solve\ac{nlp}}($\vec{w} ; \; \mathbf{C}^*, \; \vec{r}, \; \inistate, \; \Delta t$) \Comment{solve \eqref{eq:nlp_cfr}}
			\EndFunction \ \Return $\vec{w}^*$ \Comment{return newly found trajectory}
		\end{algorithmic}
		\caption{the \ac{sciam}-iteration}
		\label{alg:solve_sciam}
	\end{algorithm}

	In Line 2 we find an initial tuple of center points $\mathbf{C}$. In
	practice the \aclp{cfr} resulting from these center points are very
	small and therefore very restrictive, which leaves little room for optimization, especially if the
	initial guess $\vec{w}$ approaches obstacles closely. To overcome this
	problem we maximize \acfp{cfr} (Line~3) by solving the following
	optimization problem for each $\cen \in \mathbf{C}$ and obtain an optimized
	center point $\cen^* = \eta \cdot \mathbf{g} + \cen$:
	\begin{equation}\label{eq:grow_cfr}
		\underset{\eta\geq 0}{\max} ~ \eta \quad \mathrm{s.t.} \quad \dist\left(\eta \cdot \mathbf{g} + \cen\right) = \eta + \dist(\cen),
	\end{equation}
	where $\cen \in \admissible$ is a given initial point, $\mathbf{g} \in \R^n$ is the search direction with $\norm{\mathbf{g}}_2 = 1$ and $\eta$ is the step size. It yields a maximized \acl{cfr} $\admissible_{\cen^*}$ with radius $r=\dist(\cen^*)$ and center point $\cen^* = \eta \cdot \mathbf{g} + \cen$ for each $\cen \in \mathbf{C}$. The optimized center points are collected in the tuple $\mathbf{C}^* = (\cen^*_0, \ldots \cen^*_N)$.
	\priority{To ensure convergence of Alg.\ref{alg:solve_sciam} we require that the distance function is bounded, i.e. $\exists\; \dmax > 0 $ such that $\dist(\pos)~\leq~\dmax \; \forall\; \pos \in \workspace$.}

	We will now show that the optimization problem \eqref{eq:grow_cfr} preserves feasibility of the initial guess $\vec{w}$ by showing that $\admissible_{\cen^*}$ includes $\admissible_{\cen}$, i.e. $\admissible_{\cen} \subseteq \admissible_{\cen^*}$.

	\begin{lemma} \label{lem:grow_cfr}
		For $\cen \in \admissible$, $\mathbf{g} \in \{g \in \R^n : \norm{g}_2 = 1 \} $ and $\eta \geq 0$,
		$\dist(\cen^*) = \eta + \dist(\cen) \Rightarrow \admissible_{\cen} \subseteq \admissible_{\cen^*}$ holds with $\cen^* = \eta \cdot \mathbf{g} + \cen$.
	\end{lemma}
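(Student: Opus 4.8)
The plan is to prove the set inclusion $\admissible_{\cen} \subseteq \admissible_{\cen^*}$ directly: I would take an arbitrary point $\pos \in \admissible_{\cen}$ and verify that it satisfies the defining inequality of $\admissible_{\cen^*}$. By Definition~\ref{def:cfr}, membership $\pos \in \admissible_{\cen}$ means $\norm{\pos - \cen}_2 \leq \dist(\cen) - \dmin$, and the target inequality to establish is $\norm{\pos - \cen^*}_2 \leq \dist(\cen^*) - \dmin$.

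First I would rewrite the displacement to the new center using $\cen^* = \cen + \eta\,\mathbf{g}$, so that $\pos - \cen^* = (\pos - \cen) - \eta\,\mathbf{g}$. Applying the triangle inequality together with the unit-norm condition $\norm{\mathbf{g}}_2 = 1$ then yields $\norm{\pos - \cen^*}_2 \leq \norm{\pos - \cen}_2 + \eta$. Chaining this with the membership bound on $\pos$ gives $\norm{\pos - \cen^*}_2 \leq (\dist(\cen) - \dmin) + \eta$. Finally I would invoke the hypothesis $\dist(\cen^*) = \eta + \dist(\cen)$ to recognize the right-hand side as exactly $\dist(\cen^*) - \dmin$, the radius of $\admissible_{\cen^*}$, which closes the inclusion.

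The geometric content is simply that shifting the center by a distance $\eta$ (distance $\eta$ precisely because $\norm{\mathbf{g}}_2 = 1$) while enlarging the radius by the same $\eta$ cannot drop any previously contained point: the radius growth exactly compensates for the center shift. I do not expect any real obstacle here. The only points requiring care are that the triangle inequality is applied in the correct direction to produce an upper bound, and that the unit-norm assumption on $\mathbf{g}$ is what makes the center displacement equal to $\eta$ rather than $\eta\,\norm{\mathbf{g}}_2$ with some other magnitude; both are immediate from the stated hypotheses.
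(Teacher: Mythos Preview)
Your proof is correct and uses essentially the same argument as the paper: both apply the triangle inequality to bound $\norm{\pos - \cen^*}_2$ by $\norm{\pos - \cen}_2 + \eta$, then invoke the membership bound and the hypothesis $\dist(\cen^*) = \eta + \dist(\cen)$. The only cosmetic difference is that the paper phrases it as a proof by contradiction while you argue the inclusion directly.
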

	\begin{proof}
		We will prove this by contradiction, assuming $\exists \; \pos \in \admissible_{\cen}$ s.t. $\pos \notin \admissible_{\cen^*}$.
		Using Def.~\ref{def:cfr} we can rewrite this as $\norm{(\eta \cdot \mathbf{g} + \cen) - \pos}_2 > \dist(\cen^*) - \dmin$. Applying the triangle inequality on the left side yields $\norm{\pos - (\cen + \eta \cdot \bf{g})}_2 \leq \norm{\pos - \cen}_2 + \norm{\eta \cdot \bf{g}}_2 = \norm{\pos - \cen}_2 + \eta$ and based on our assumption $\norm{\pos - \cen}_2 + \eta \leq \dist(\cen) - \dmin + \eta$ holds. Inserting this gives $\dist(\cen) + \eta  - \dmin > \dist(\cen^*) - \dmin$ and thus contradicts the condition $\dist(\cen^*) = \eta + \dist(\cen)$.
	\end{proof}

	To solve the line search problem \eqref{eq:grow_cfr} we propose to use the distance function's normalized gradient $\mathbf{g} = \frac{\nabla \dist(\cen)}{\norm{\nabla \dist(\cen)}_2}$ as search direction.
	Starting from $\eta = \lb{\eta} > 0$ the step size is exponentially increased until a step size $\ub{\eta} > \lb{\eta}$ is found for which the constraint is violated.
  The optimal step size can now be found using the bisection method.
	\textsc{solveNLP} uses a suitable solver to solve \eqref{eq:nlp_cfr}, e.g. Ipopt \cite{Waechter2006}, and computes a new trajectory $\vec{w}^*$ (Line 4).
	\begin{lemma} \label{lem:recursive_feasibilty}
		For a feasible initial guess $\vec{w} \in  \set{F}_{\eqref{eq:nlp_init}}$ Alg.~\ref{alg:solve_sciam} finds a feasible point $\vec{w}^* \in \set{F}_{\eqref{eq:nlp_init}}$ with $J(\vec{w}^*) \leq J(\vec{w})$.
	\end{lemma}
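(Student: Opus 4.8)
The plan is to prove the claim by chaining the three preceding lemmas along the three steps of Alg.~\ref{alg:solve_sciam}: feasibility is first transported from the original \ac{nlp} \eqref{eq:nlp_init} into the \ac{sciam}-\ac{nlp} \eqref{eq:nlp_cfr} built with the grown centers $\mathbf{C}^*$, then preserved by \textsc{solveNLP}, and finally transported back to \eqref{eq:nlp_init}. Concretely I would show (i) that the initial guess $\vec{w}$, augmented with zero slacks, is feasible for \eqref{eq:nlp_cfr}; (ii) that \textsc{solveNLP} therefore returns $\vec{w}^*$ feasible for \eqref{eq:nlp_cfr} with $J(\vec{w}^*)\leq J(\vec{w})$; and (iii) that Lem.~\ref{lem:sciam_subset} maps $\vec{w}^*$ back into $\set{F}_{\eqref{eq:nlp_init}}$.

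For step (i), since $\vec{w}\in\set{F}_{\eqref{eq:nlp_init}}$ every constraint of \eqref{eq:nlp_init} holds; in particular \eqref{eq:nlp_cac} gives $\dist(\pos_k)\geq\dmin$ for all $k$. Line~2 sets $\cen_k=S_\pos\cdot\state_k=\pos_k$, so $\norm{\pos_k-\cen_k}_2=0\leq\dist(\cen_k)-\dmin$, i.e. $\pos_k\in\admissible_{\cen_k}$ by Def.~\ref{def:cfr} (the center always lies in its own \acl{cfr}, even when the ball is degenerate). The growth step (Line~3) solves \eqref{eq:grow_cfr}, so by Lem.~\ref{lem:grow_cfr} $\admissible_{\cen_k}\subseteq\admissible_{\cen_k^*}$ and hence $\pos_k\in\admissible_{\cen_k^*}$; squaring the defining inequality yields exactly \eqref{eq:nlp_cfr_constraint} with $s_k=0$. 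As the dynamics, path, initial, and terminal constraints of \eqref{eq:nlp_cfr} coincide with those of \eqref{eq:nlp_init}, the pair $(\vec{w},\vec{s}=0)$ lies in $\set{F}_{\eqref{eq:nlp_cfr}}(\mathbf{C}^*)$.

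Step (ii) uses that \textsc{solveNLP} is warm-started at the feasible point $(\vec{w},0)$ and returns a feasible point of no larger objective. Because the slacks vanish at $(\vec{w},0)$, the penalized objective $J(\vec{w})+\sum_{k=0}^N\mu_k s_k$ equals $J(\vec{w})$ there; a descent-type solver (e.g. Ipopt) returns $\vec{w}^*$ with penalized objective $\leq J(\vec{w})$, and since $\vec{w}^*$ is feasible its slacks are zero, giving $J(\vec{w}^*)\leq J(\vec{w})$.

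For step (iii) I invoke Lem.~\ref{lem:sciam_subset}, whose hypothesis $\dist(\cen_k^*)>\dmin$ for all $k$ is the delicate point and the main obstacle. The critical case is $\dist(\pos_k)=\dmin$, where the initial ball is degenerate; here the line search in \eqref{eq:grow_cfr} starts from $\eta=\lb{\eta}>0$, and since $\dmin>0$ the center $\cen_k$ lies strictly outside $\occupied$ where a step along the normalized distance gradient strictly increases $\dist$, so $\dist(\cen_k^*)=\eta+\dist(\cen_k)\geq\lb{\eta}+\dmin>\dmin$, with $\eta$ finite because $\dist\leq\dmax$. With the strict inequality in hand, Lem.~\ref{lem:sciam_subset} gives $\set{F}_{\eqref{eq:nlp_cfr}}(\mathbf{C}^*)\subseteq\set{F}_{\eqref{eq:nlp_init}}$, hence $\vec{w}^*\in\set{F}_{\eqref{eq:nlp_init}}$, which closes the chain.
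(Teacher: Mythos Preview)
Your proposal is correct and follows essentially the same approach as the paper's proof: both chain Lem.~\ref{lem:grow_cfr} (to obtain $\admissible_{\cen_k}\subseteq\admissible_{\cen_k^*}$, hence $\set{F}_{\eqref{eq:nlp_cfr}}(\mathbf{C})\subseteq\set{F}_{\eqref{eq:nlp_cfr}}(\mathbf{C}^*)$) with Lem.~\ref{lem:sciam_subset} (to pass from $\set{F}_{\eqref{eq:nlp_cfr}}(\mathbf{C}^*)$ back to $\set{F}_{\eqref{eq:nlp_init}}$), together with the assumption of a suitable \ac{nlp}-solver. Your treatment is in fact more thorough than the paper's very terse argument, since you explicitly verify that $(\vec{w},\vec{s}=0)\in\set{F}_{\eqref{eq:nlp_cfr}}(\mathbf{C}^*)$ and address the strict hypothesis $\dist(\cen_k^*)>\dmin$ needed for Lem.~\ref{lem:sciam_subset}, which the paper simply imposes as a standing assumption rather than deriving.
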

	\begin{proof}
		We prove this in two steps: first we assume that \textsc{solve\ac{nlp}} uses a suitable, working \ac{nlp}-solver, then we show the feasibility.
		From $\admissible_{\cen} \subseteq \admissible_{\cen^*}$ as shown in Lem.~\ref{lem:grow_cfr} follows $\set{F}_{\eqref{eq:nlp_cfr}}(\mathbf{C}) \subseteq \set{F}_{\eqref{eq:nlp_cfr}}(\mathbf{C}^*)$. Further Lem.~\ref{lem:sciam_subset} yields $\set{F}_{\eqref{eq:nlp_cfr}}(\mathbf{C}) \subseteq \set{F}_{\eqref{eq:nlp_cfr}}(\mathbf{C}^*) \subseteq \set{F}_{\eqref{eq:nlp_init}}$.
	\end{proof}
	\subsection{Continuous Time Collision Avoidance for Systems with Bounded Acceleration} \label{sec:ctca}
	The constraints formulated in \eqref{eq:cfr_constraint} can be extended to the continuous time case. Using Lem.~\ref{lem:inner_approximation} we can write the continuous time collision avoidance constraint as
	\begin{equation*}
	\norm{\cen_k - \pos(t)}_2 \leq \dist(\cen_k) - \dmin, \quad \forall t \in [t_k, t_{k+1}], k = 0, \ldots, N.
	\end{equation*}
	Assuming a double integrator model of the form
	$\pos(t) = \pos_k + \dot{\pos}_k \cdot (t - t_k) + \int_{t_k}^{t} \int_{t_k}^\tau \ddot{\pos}(s) \; \mathrm{d}\, s \; \mathrm{d}\, \tau$
	with the shorthand $\pos_k = \pos(t_k)$
	for all $k = 0, \ldots, N$ and $t \in [t_k, \; t_{k+1}]$ it can be written as
	{\small
	\begin{equation*}
	\norm{\pos_k + \dot{\pos}_k \cdot (t - t_k) + \int_{t_k}^{t} \int_{t_k}^\tau \ddot{\pos}(s) \; \mathrm{d}\, s \; \mathrm{d}\, \tau - \cen_k}_2 \leq \dist(\cen_k) - \dmin. \quad
	\end{equation*}
	}
	Using the triangle inequality we get
	{\small
	\begin{equation*}
	\norm{\pos_k - \cen_k}_2 \leq \dist(\cen_k) - \dmin - \norm{\dot{\pos}_k}_2 \cdot (t - t_k) -  \norm{\int_{t_k}^{t} \int_{t_k}^\tau \ddot{\pos}(s) \; \mathrm{d}\, s \; \mathrm{d}\, \tau}_2.
	\end{equation*}
	}
	With $\norm{\int \ddot{\pos}(\tau) \mathrm{d}\, \tau}_2 \leq \int \norm{\ddot{\pos}(\tau)}_2 \mathrm{d}\, \tau$ and assuming that system's total acceleration is bounded $\norm{\ddot{\pos}(t)}_2 \leq \ub{a} \;\; \forall\, t \in \R$, which is a reasonable assumption for most physical systems, yields
	\begin{equation*}
	\norm{\pos_k - \cen_k}_2 \leq \dist(\cen_k) - \dmin - \norm{\dot{\pos}_k}_2 \cdot (t - t_k) -  \frac{\ub{a}}{2} \cdot (t - t_k)^2.
	\end{equation*}
	We assume that velocities are bounded in all discretization points, i.e., $\norm{\dot{\pos}_k}_2 \leq \ub{v}$ for $k=0,\ldots,N$.
	Considering that \linebreak$t_0 - \frac{\Delta t}{2}$ and $t_N + \frac{\Delta t}{2}$ lie outside of the prediction horizon and that $\norm{\dot{\pos}_{k\pm1}}_2 \leq \ub{v}$ for $k=1,\ldots,N-1$, it is sufficient to consider the interval $t_k \pm \frac{\Delta t}{2}$ in each time step $t_k$ and get
	\begin{equation*}
		\norm{\pos_k - \cen_k}_2 \leq \dist(\cen_k) - \dmin - \ub{v} \cdot \frac{\Delta t}{2}  -  \ub{a} \cdot \frac{\Delta t^2}{8} \quad \text{for } k=0,\ldots,N.
	\end{equation*}%
	\begin{figure}
		\centering
		\def\svgwidth{0.9\columnwidth}
	\executeiffilenewer{figures/ctcs_sampling.svg}{figures/ctcs_sampling.pdf}%
	{inkscape -z -D --file=/Users/tobi/Documents/pubs/schoels-iros-2019/figures/ctcs_sampling.svg %
		--export-pdf=/Users/tobi/Documents/pubs/schoels-iros-2019/figures/ctcs_sampling.pdf --export-latex}%
	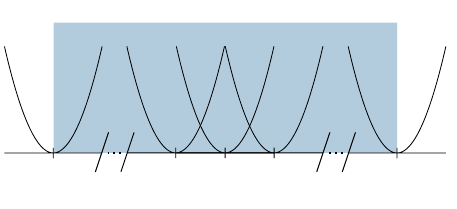%

		\caption{Maximal action radius between sampling points.}
		\label{fig:ctcs}
	\end{figure}%
	To illustrate the reasoning Fig.~\ref{fig:ctcs} sketches the maximum deviation from the current location over time.
  For the sake of simplicity we assume
	\begin{equation} \label{eq:ctime_collision_freedom}
	\dmin_k \geq \ub{v} \cdot \frac{\Delta t}{2} + \ub{a} \cdot \frac{\Delta t^2}{8} + \dmin \quad k=0,1,\ldots, N
	\end{equation}
	from now on, such that \eqref{eq:nlp_cfr_constraint} guarantees continuous-time collision freedom is under the assumptions made above.

	Note that \eqref{eq:ctime_collision_freedom} implies that $\admissible_{\cen_\mathrm{k}}$ and $\admissible_{\cen_{\mathrm{k}+1}}$ overlap or touch in the point where the trajectory transits from one into the next.
  This is achieved by taking the robot's action radius into account, see Fig.~\ref{fig:ctcs}.

	\subsection{Rotational Invariance Trick}
	Modeling a robot's kinematics and dynamics using Euler-angles and continuous variables for the orientation is intuitive, but leads to ambiguities and possibly singularities, e.g. the gimbal lock.
	Unit quaternions are widespread approach to circumvent these problems for 3-D rotations.
	For the 2-D case, however, they pose an avoidable overhead. In this case we use Euler-angles and continuous orientation variables for computational efficiency and the `\textit{rotational invariance trick}` to compensate for the ambiguity. We measure the distance between of two orientations $\theta_1, \theta_2 \in \R$ by
	\begin{equation}\label{eq:orientation_term}
	d(\theta_1, \theta_2) = \norm{\matr{\cos \theta_1 - \cos \theta_2 \\ \sin \theta_1 - \sin \theta_2}}_2.
	\end{equation}
	Note that $d(\theta_1, \theta_2) = d(\theta_1 + l_1\cdot 2\pi, \theta_2 + l_2 \cdot 2\pi), \forall l_1, l_2 \in \mathbb{Z}$.
	We thereby avoid unnecessary $360^\circ$ rotations of the robot, which is relevant for example if the robot drives in a circle.

	\subsection{Choosing the Objective Function}
	We use a quadratic cost function for reference tracking with regularization:
	{\small
	\begin{align*}
		J(\mathbf{w}) = &\sum_{k=0}^{N-1} \alpha^k \norm{\mathbf{q}(\state_k) - \mathbf{q}(\refstate_k)}_Q^2 + \norm{\controls_k - \refcontrols_k}_R^2 \\
		&+ \alpha^N \norm{\mathbf{q}(\state_N) - \mathbf{q}(\refstate_N)}_{Q_N}^2,
	\end{align*}
	}%
	where $\alpha > 1$ leads to exponentially increasing stage cost and reduces oscillating behavior around the goal, $\refstate_k, \refcontrols_k$ denote reference state and controls at stage $k$, $\mathbf{q}: \R^\nstate \to \R^{n_{\mathrm{\mathbf{q}}}}$ augments the state $\state$ by applying proper transformations 
	where applicable, $Q, Q_N\in\R^{n_{\mathrm{\mathbf{q}}} \times n_{\mathrm{\mathbf{q}}}}$, and $R\in\R^{\ncontrols\times \ncontrols}$ are positive definite matrices.

	\section{\ac{sciam}-BASED MOTION PLANNING} \label{sec:motion_planning}
	In this section we propose and detail two algorithms, one for pure trajectory optimization (Alg.~\ref{alg:sciam_trajopt}) and the other for simultaneous trajectory optimization and tracking (Alg.~\ref{alg:sciam}).

	\subsection{\ac{sciam} for Trajectory Optimization}
 The proposed trajectory optimization algorithm (see Alg.~\ref{alg:sciam_trajopt}) starts by computing a feasible initial guess and a reference trajectory (Lines 1--2).
	\begin{algorithm}
		\small
		\begin{algorithmic}[1]
			\Require $\state_\mathrm{S},\; \state_\mathrm{G},\; \Delta t, \; \varepsilon$ \Comment{start and goal state}
			\State $\vec{w}^* \gets$ \textsc{initialGuess}($\state_\mathrm{S},\; \state_\mathrm{G}, \; \Delta t$) \Comment feasible initialization
			\State $\mathbf{r} \gets$ \textsc{referenceTrajectory}($\state_\mathrm{S},\; \state_\mathrm{G}, \; \Delta t$)
			\Do
			\State $\mathbf{w} \gets \mathbf{w}^*$ \Comment{set last solution as initial guess}
			\State $\vec{w}^* \gets \textsc{\ac{sciam}-iteration}(\vec{w}\, ; \; \vec{r}, \; \inistate, \; \Delta t)$ \Comment $\inistate = \state_\mathrm{S}$
			\DoWhile {$\textsc{cost}(\mathbf{w}^*) - \textsc{cost}(\vec{w}) > \varepsilon$}
			\State \Return $\mathbf{w}^*$
		\end{algorithmic}
		\caption{\ac{sciam} for offline trajectory optimization}
		\label{alg:sciam_trajopt}
	\end{algorithm}
	In the general case of nonconvex scenarios, such as cluttered environments, feasible initializations can be obtained through a sampling-based motion planner \cite{palmieri2015distance,palmieri2017kinodynamic}.
	To monitor the progress the initial guess is copied (Line~4), before using
	it as initial guess for the \textsc{\ac{sciam}-iteration} (Line~5).
	Lines~4--5 are repeated as long as the \textsc{cost}-function shows an improvement that exceeds a given threshold $\varepsilon$ (Line~6).
	Finally the best known solution $\vec{w}^*$ is returned (Line~7).
		For trajectory optimization the terminal constraint in \eqref{eq:nlp_cfr} becomes an equality constraint, which enforces that the goal state $\state_\mathrm{G}$ is reached at the end of the horizon, i.e. $\mathbb{X}_\mathrm{T} = \{\state_\mathrm{G}\}$.

	\subsection{\ac{sciam}-NMPC}
	While  Alg.~\ref{alg:sciam_trajopt} iteratively improves a trajectory that connects $\state_\mathrm{S}$ and $\state_\mathrm{G}$, Alg.~\ref{alg:sciam}  uses a shorter, receding horizon.
	\priority[2]{This can be considered the \ac{rti} version of Alg.~\ref{alg:sciam_trajopt}.}
	Therefore the trajectory computed by \textsc{initialGuess} (Line~1) is not required to reach the goal state $\state_\mathrm{G} \in \mathbb{X}_\mathrm{G}$. \priority[2]{In many cases it is sufficient to choose $\vec{w} = [\state_0\T, \controls_\mathrm{s}\T, \ldots, \controls_\mathrm{s}\T, \state_0\T]\T$, where $\controls_\mathrm{s}$ is chosen, such that the robot remains in the current state $\state_0$.}
	%
	\begin{algorithm}
		\small
		\begin{algorithmic}[1]
			\Require $\inistate,\; \state_\mathrm{G}, \; \Delta t, \; \mathbb{X}_\mathrm{G}$ \Comment current and goal state
			\State $\vec{w} \gets$ \textsc{initialGuess}($\inistate,\; \state_\mathrm{G}, \; \Delta t$) \Comment feasible initialization
			\While {$\inistate \notin \mathbb{X}_\mathrm{G}$ }
			\State $\inistate \gets \textsc{getCurrentState}()$
			\State $\vec{r} \gets$ \textsc{referenceTrajectory}($\inistate, \; \state_\mathrm{G}, \; \Delta t$)
			\State $\vec{w}^* \gets$ \textsc{\ac{sciam}-iteration}($\vec{w}; \; \vec{r}, \; \inistate, \Delta t$) \Comment Alg.~\ref{alg:solve_sciam}
			\State $\textsc{applyFirstControl}(\vec{w}^*)$ \Comment recall $\controls_0 \in \vec{w}^*$
			\State $\vec{w} \gets \textsc{shiftTrajectory}(\vec{w}^*)$ \Comment recede horizon
			\EndWhile
		\end{algorithmic}
		\caption{\ac{sciam}-\ac{nmpc}}
		\label{alg:sciam}
	\end{algorithm}

	While the robot has not reached the goal region $\mathbb{X}_\mathrm{G}$ (Line~2), it is iteratively steered to it (Lines~3--8).
	Each iteration starts by updating the robot's current state $\state_0$.
	Based on the complexity of the scenario \textsc{referenceTrajectory} may return a guiding trajectory to the goal or just the goal state itself (Line~4).
	We run Alg.~\ref{alg:solve_sciam} to compute a new trajectory (Line~5), before sending the first control to the robot (Line~6).
	\textsc{shiftTrajectory} moves the horizon one step forward (Line~7).

	To ensure recursive feasibility, which implies collision avoidance, the terminal constraint \eqref{eq:nlp_cfr_terminal_constraint} is commonly chosen such that the robot comes to a full stop at the end of the horizon, i.e. $\mathbb{X}_\mathrm{T} = \{\state \in \R^\nstate : S_\mathrm{v}\cdot\state = 0\}$, where $S_\mathrm{v}\in \R^{n_\mathrm{v}\times\nstate}$ is the matrix that selects the velocities from the state vector.


	\section{EXPERIMENTS AND DISCUSSION} \label{sec:experiments}
	\begin{figure*}[t!]
		\centering
		\begin{subfigure}[t]{\linewidth}
			\centering
			\includegraphics[width=0.19\linewidth]{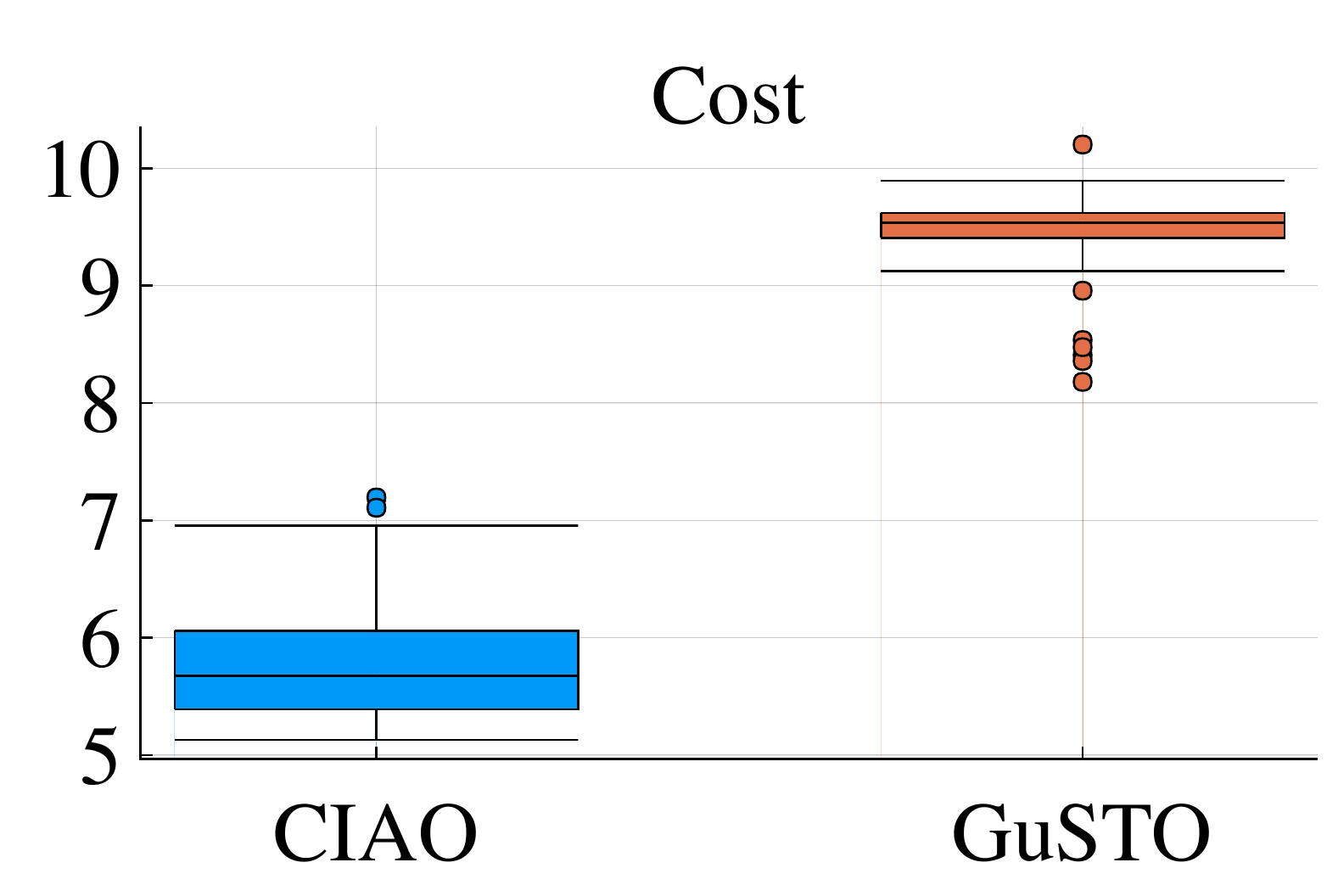}
			\includegraphics[width=0.19\linewidth]{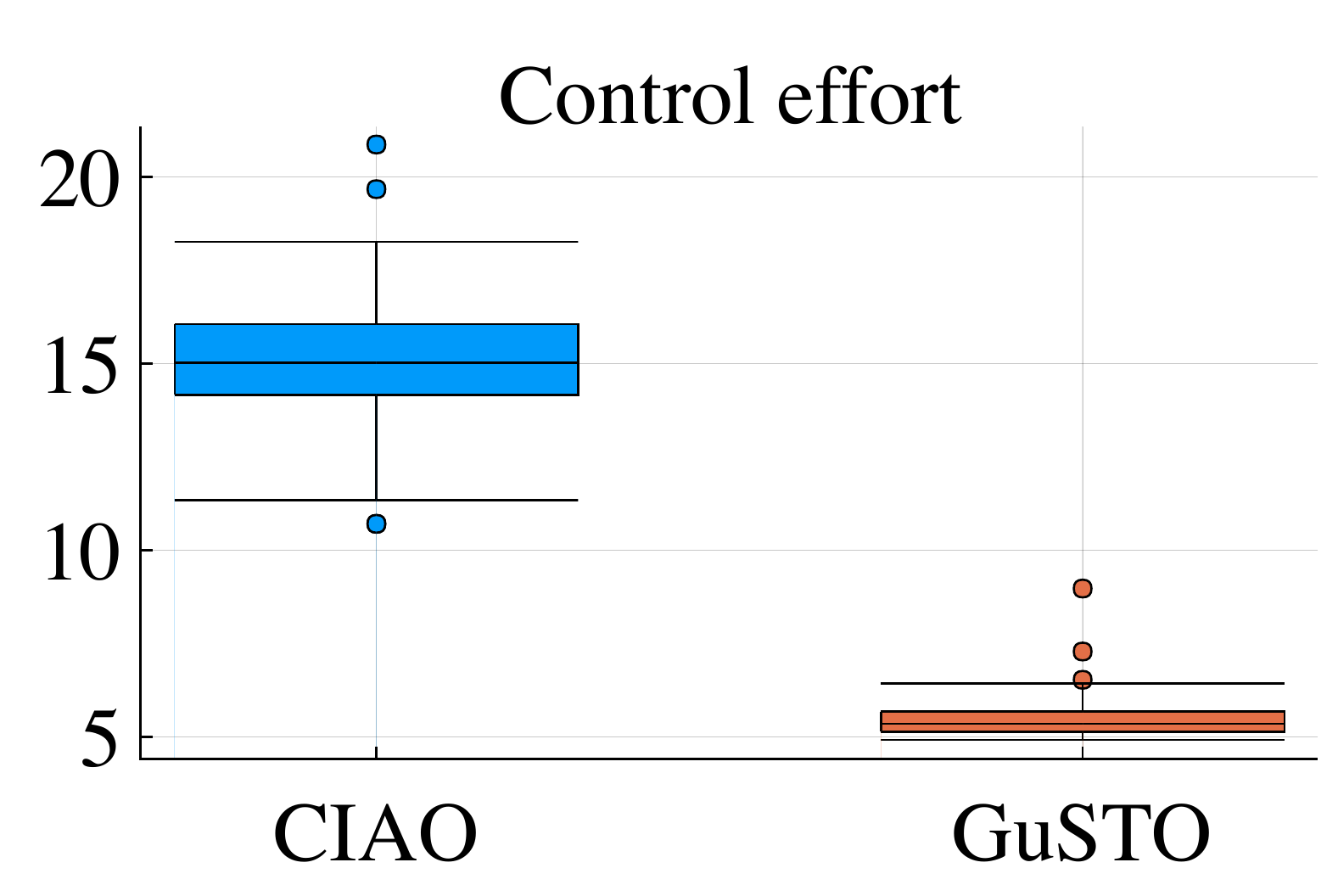}
			\includegraphics[width=0.19\linewidth]{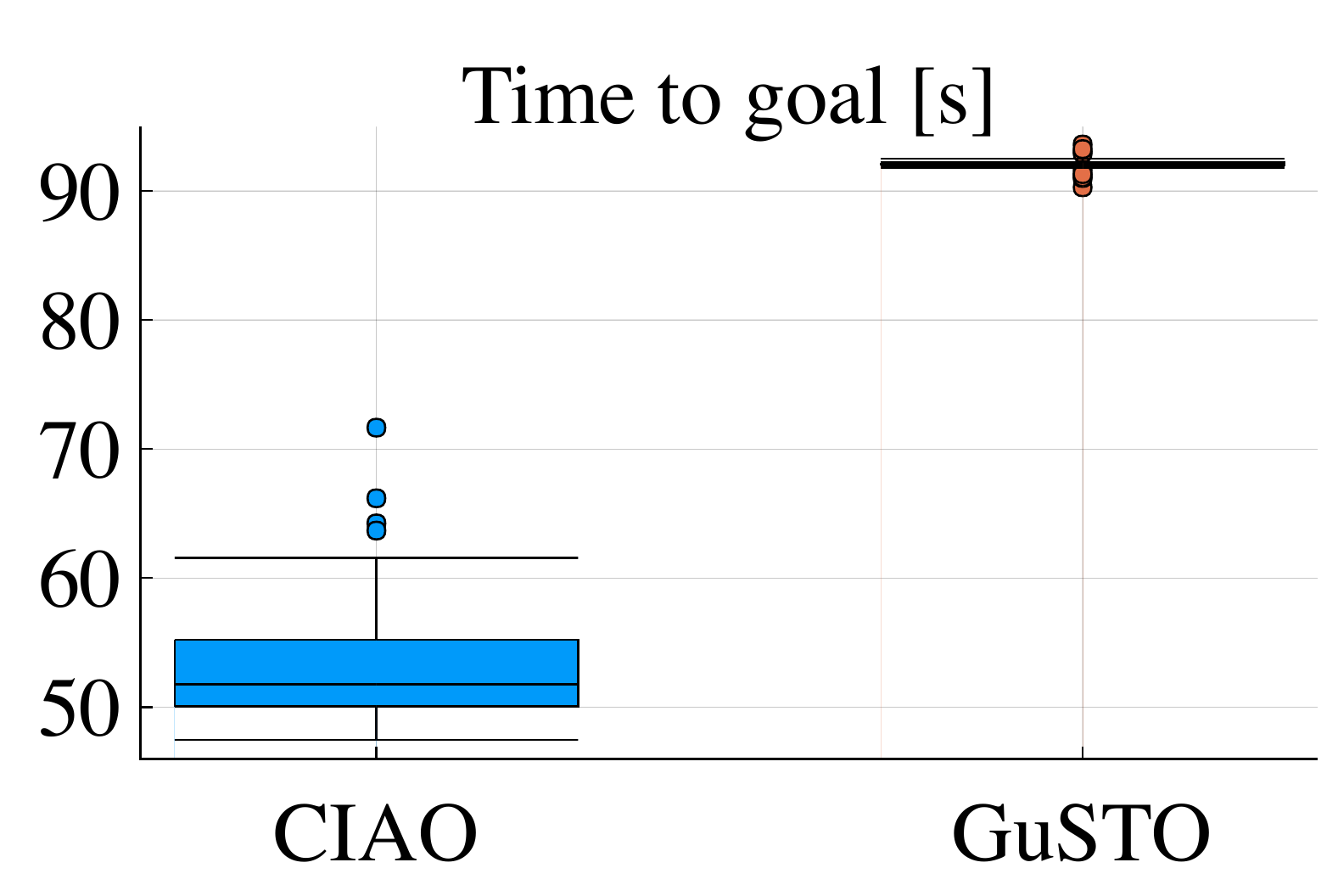}
			\includegraphics[width=0.19\linewidth]{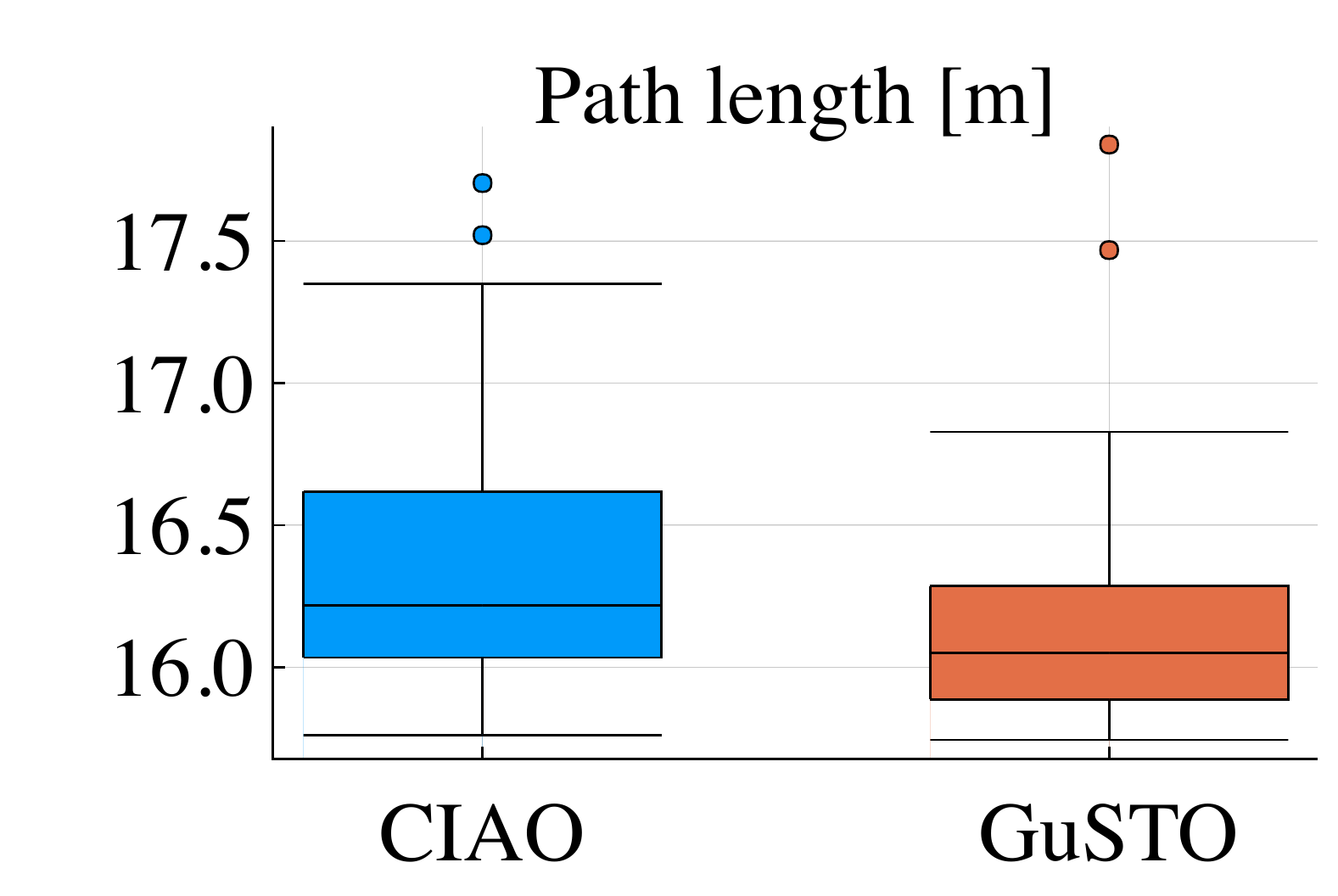}
			\includegraphics[width=0.19\linewidth]{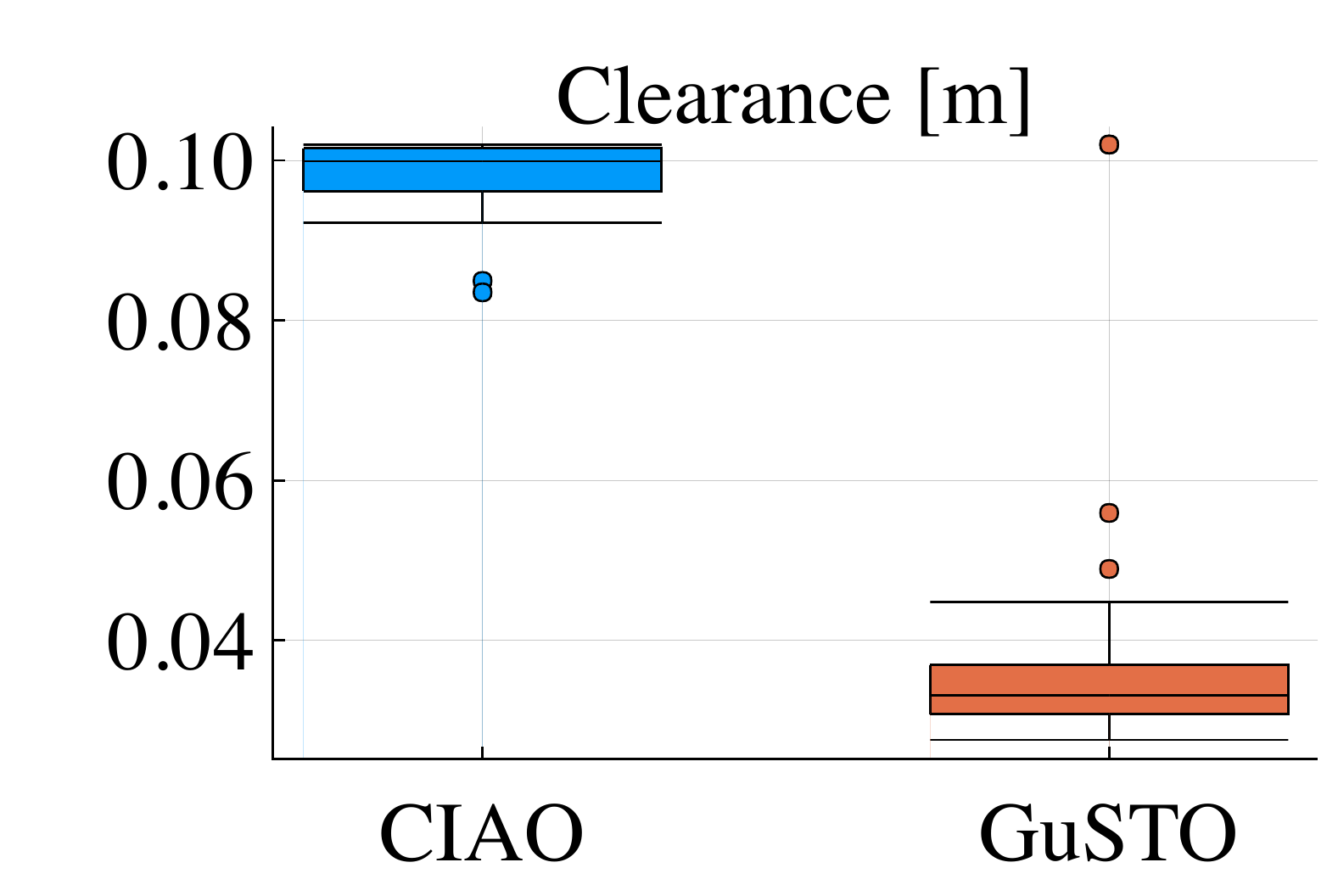}
		\end{subfigure}%
		\caption{Trajectory Optimization Benchmark Results. CIAO finds faster trajectories with higher clearance than GuSTO.\\[-0.1cm]
		}
		\label{fig:benchmark}
	\end{figure*}%
	To evaluate \ac{sciam} in terms of planning efficiency and final trajectory quality, we compare it against a set of baselines.
	We challenge \ac{sciam} by using nonlinear dynamics and a nonconvex cost function. Further we use a sampling based motion planner to initialize it with a collision free path that does not satisfy the robot's dynamics.
	In this case a \ac{nlp}-solver is required to solve the \ac{sciam}-\ac{nlp} \eqref{eq:nlp_cfr}. We use the primal-dual interior point solver Ipopt \cite{Waechter2009} with the linear solver MA-27 \cite{HSL} called through CasADi \cite{Andersson2018} (version~3.4.5).

	\noindent For the evaluation, we consider three types of experiments:
  \begin{itemize}
    \item[(A)] numerical experiments to
	investigate the behavior of the \acl{cfr} constraint against competing
	formulations; \par
  \item[(B)] a trajectory optimization benchmark to evaluate the quality of trajectories found by CIAO; \par
  \item[(C)] real-world experiments where \ac{sciam} is qualitatively compared to a
	state of the art baseline.
  \end{itemize}
%

	\subsection{Comparison of constraint formulations} \label{sec:numerical_experiments}
	In a first set of experiments, the numerical performance of the \acl{cfr} constraint formulation, as derived in Sec.~\ref{sec:cfr}, is compared to common alternatives: the actual constraint as defined in Eq.~\eqref{eq:min_dist} (actual), a linearization of the actual constraint (linear), and a log-barrier formulation (log-barrier).
	\priority{They differ only in the way the obstacle avoidance constraint \eqref{eq:nlp_cac} is formulated.}
	Our findings are reported in Tab.~\ref{tab:constraint_comparison}.

	%
\begin{table}
	\centering
	\begin{tabular}{lrrrr}
		\toprule
		&  \textit{actual} & \textit{linear} & \textit{\ac{sciam}} & \textit{log-barrier} \\
		\midrule
		ms / iteration     &             2.00 &                   0.72 &     \textbf{0.70} &          2.23 \\
		ms / step     &            40.78 &         \textbf{13.13} &             17.26 &         50.34 \\
		iterations / step  &            20.35 &         \textbf{18.23} &             24.64 &         22.57 \\
		time to goal [s]      &   \textbf{14.35} &                  14.39 &             18.67 &       (13.46)$^*$ \\
		path length [m]     &   \textbf{10.22} &                  10.33 &             10.58 &        (9.25)$^*$ \\
		max ms / step &           448.94 &                 230.07 &   \textbf{179.26} &        $>1000$   \\
		\% timeouts   &       \textbf{0} &             \textbf{0} &        \textbf{0} &             11.3 \\
		\bottomrule
	\end{tabular}
	\caption[]{Comparison of constraint formulations.\footnote{These experiments were conducted in simulation considering a robot with differential drive dynamics (5 states, 2 controls) and a prediction horizon of 50 steps, resulting in a total of 405 optimization variables.\\
			$^*$ in Table~\ref{tab:constraint_comparison}: not representative because complex scenarios with long transitions failed.}
	}
	\label{tab:constraint_comparison}
\end{table}

	The average computation time taken per \ac{mpc}-step and per Ipopt iteration are given as `ms / step' and `ms / iteration' respectively, `iters / step' are the average Ipopt iterations per \ac{mpc}-step.
	The path quality is evaluated in terms of `time to goal` and `path length`.
	Averages in the first five rows are taken over $62$ scenarios, for which the maximum CPU time of $1.0 \rm{s}$ was not exceeded. The percentage of runs that exceed the CPU time is given by `\% timeouts'.
	The maximum CPU time taken for a single \ac{mpc}-step is given by `max ms / step'.

	We observe that the actual constraint is producing both fastest and shortest paths. This path quality comes at comparatively high computational cost.
	Linearizing the actual constraint reduces the computational effort, while maintaining a high path quality. In contrast to \ac{sciam}, linearization is not an inner approximation and can lead to constraint violations that necessitate computationally expensive recovery iterations. This increases the overall computation time significantly and leads to a higher maximal computation time.
	At the cost of lower path quality, but a similar average computation time,
	\ac{sciam} overcomes this problem by preserving feasibility. This leads to
	a lower variance in the computation time, and allows for continuous time collision avoidance
	guarantees. A further advantage, which is relevant in practice, is that
	\ac{sciam} generalizes to not continuously differentiable distance function
	implementations, e.g. distance fields.

	Including the collision avoidance constraint as a barrier term in the objective, i.e. by adding $- \log(\dist(p_k) - \dmin)$ to the stage cost $l_k$, is an alternative approach to enforce collision freedom. Our results suggest, however, that for our application it is least favorable among the considered options.

	\subsection{Trajectory Optimization Benchmark}
	In a second set of experiments \ac{sciam} is compared to GuSTO \cite{Bonalli2019} using the implementation publicly provided by the authors. 
	In these experiments we consider a free-flying Astrobee Robot with 12 states and 6 controls, that has to be rotated and traversed from a start position on the bottom front left corner of a $10 \times 10 \times 10~\mathrm{m}$ cube to a goal in the opposite corner. The room between start and goal point is cluttered with $25$ randomly placed static obstacles of varying sizes (between $1$ and $2$ meters). Fig.~\ref{fig:benchmarkarena} shows some examples.

	The results reported in Tab.~\ref{tab:benchmark_results} and Fig.~\ref{fig:benchmark} were performed in Julia\priority[0]{\ \cite{Bezanson2017}} on a MacBook Pro with an Intel Core i7-8559U  clocked at $2.7 \rm{GHz}$. The \acp{scp} formulated by GuSTO \cite{Bonalli2019} are solved with Gurobi \cite{Gurobi2018}. Both algorithms are provided with the same initial guess,  which is computed based on a path found with RRT \cite{LaValle2006}.
	We used a horizon of $100~\mathrm{s}$ equally split into $250$ steps, resulting in a sampling time of $0.4~\mathrm{s}$.
	\begin{table}
		\centering
		\begin{tabular}{lcc}
\toprule
measure & \multicolumn{1}{c}{CIAO} & \multicolumn{1}{c}{GuSTO} \\
\midrule
Compute{\footnotemark} [s] & 14.792$\pm$11.966 & 131.367$\pm$130.743 \\
Iterations & 30.660$\pm$15.886 & 4.520$\pm$1.282 \\
Compute / Iteration [s] & 0.475$\pm$0.230 & 27.729$\pm$22.096 \\
Linearization Error & 4.66e-14$\pm$3.67e-15 & 4.10e-06$\pm$1.28e-06 \\
\bottomrule
\end{tabular}

		\caption{Numerical Performance: Average $\pm$ std values.}
		\footnotetext{These timings are only indicative due to differences in implementation, a similar trend is confirmed in Fig.~\ref{fig:cfr_compute}.}
		\label{tab:benchmark_results}
	\end{table}%
	Since both GuSTO and \ac{sciam} use tailored cost functions we evaluate the computed trajectories using a common cost function $J_\rho$, which is based on the state distance metric $\rho: R^\nstate \times R^\nstate \to \R$ proposed by \cite{LaValle2001}:
	$J_\rho(\vec{w}; \state_\mathrm{G}) = \sum_{k=0}^{N} \rho(\state_k, \state_\mathrm{G})$, with goal state $\state_\mathrm{G}$ and all weights of the distance metric chosen equal.
	The controls are evaluated separately and reported as control effort given by $J_\mathrm{\controls}(\vec{w}) = \sum_{k=0}^{N-1} \Delta t \cdot \norm{\controls_k}_1$.
	The path quality is evaluated in terms of time to goal, path length, and clearance (minimum distance to the closest obstacle along the trajectory). The first two measures take the time and path length until the state distance metric falls below a threshold of $0.5$, while the latter is evaluated on the entire trajectory. These three metrics are evaluated on an oversampled trajectory using a sampling time $\Delta t =0.01\,\mathrm{s} $.

The results in Fig.~\ref{fig:benchmark} show that \ac{sciam} finds faster trajectories than GuSTO and thereby also achieves significantly lower cost.
As depicted in Fig.~\ref{fig:benchmarkarena} it maintains a larger distance to obstacles for higher speeds.
	This behavior allows for a higher average speed, at the cost of a higher control activation and slightly longer paths in comparison to GuSTO.

	In our experiments both \ac{sciam} and GuSTO find solutions to all considered scenarios.
	As reported in Tab.~\ref{tab:benchmark_results}, \ac{sciam} (Alg.~\ref{alg:sciam_trajopt}) requires more iterations
	to converge, but the individual iterations are cheaper.
	Moreover \ac{sciam} obtains a feasible trajectory after
	the first iteration and therefore could be terminated early, while GuSTO does
	not have this property and takes several iterations to find a feasible
	trajectory.
	Even though the dynamics are mostly linear we observe linearization errors for GuSTO, originating from the linear model they use.

	In summary CIAO finds trajectories of higher quality than GuSTO at lower computational effort.

%
%
%
	\subsection{Real-World Experiments - Differential Drive Robot}
	\begin{figure}[t!]
		\centering
		\begin{subfigure}[t]{\columnwidth}
			\centering
			\includegraphics[trim=13 0 0 0 , clip,width=0.32\columnwidth]{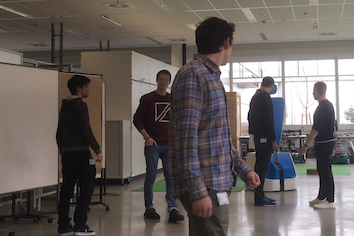}
			\includegraphics[trim=13 0 0 0 , clip,width=0.32\columnwidth]{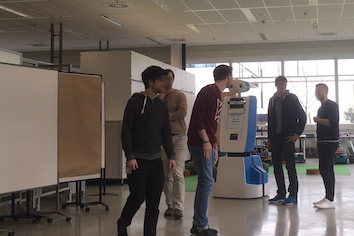}
			\includegraphics[trim=13 0 0 0 , clip,width=0.32\columnwidth]{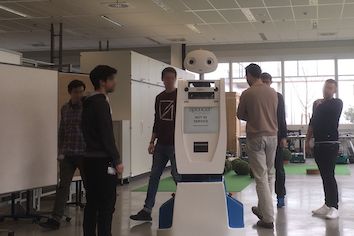}
		\end{subfigure}%
		\vspace{5pt}
		\begin{subfigure}[t]{\columnwidth}
			\centering
			\includegraphics[trim=100 70 300 80, clip,width=0.32\columnwidth]{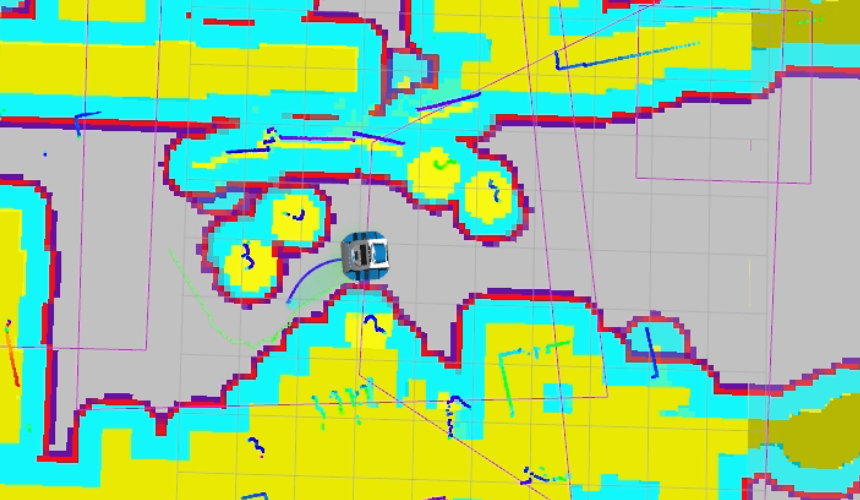}
			\includegraphics[trim=100 70 300 80, clip,width=0.32\columnwidth]{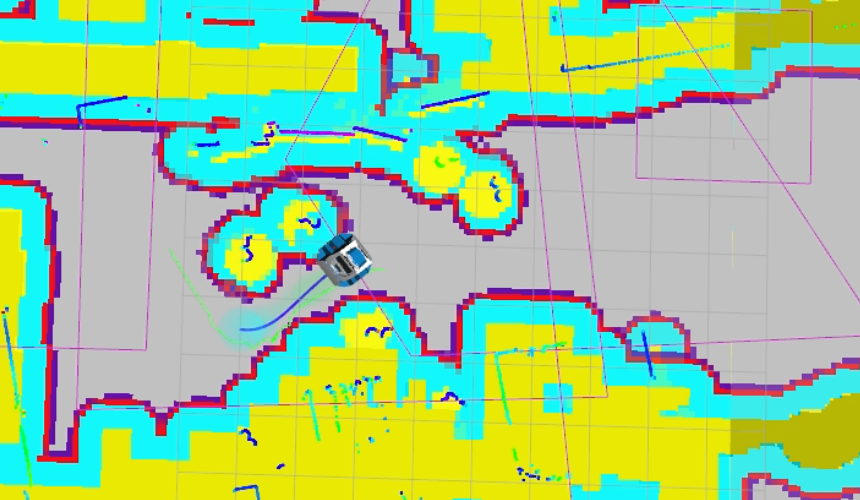}
			\includegraphics[trim=100 70 300 80, clip,width=0.32\columnwidth]{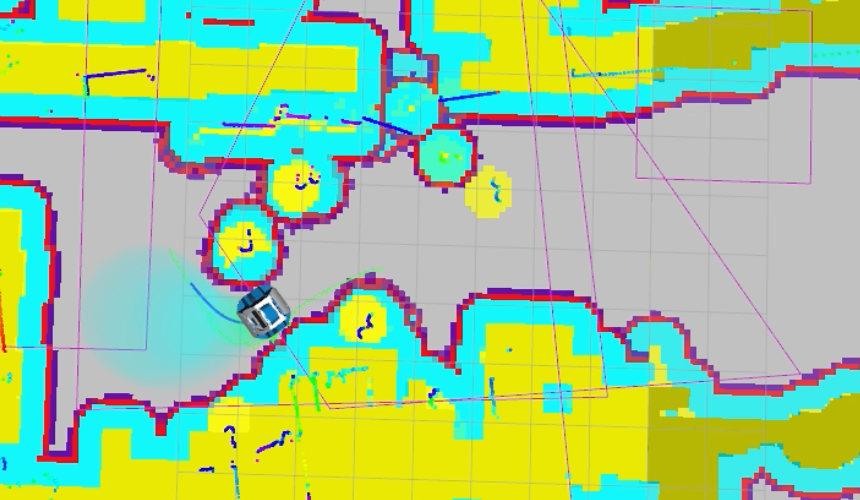}
		\end{subfigure}%
		\caption{\acs*{sciam} steers a wheeled mobile robot through a group of people. Real-world (top) and RViz (bottom): Planned trajectory as blue line, \aclp*{cfr} as transparent circles, obstacles in yellow, safety margin in light blue.}
		\label{fig:sciam_example}
	\end{figure}
	To qualitatively assess the behavior of \ac{sciam}-\ac{nmpc} (Alg.~\ref{alg:sciam}), it was tested in dynamic real-world scenarios with freely moving humans. A representative example is depicted in Fig.~\ref{fig:sciam_example}.
	Note that \ac{sciam} has no knowledge of the humans' future movements.
	It is instead considering all humans as static obstacles in their current position.

	As in Sec.~\ref{sec:numerical_experiments} a differential drive robot is used, this time with a horizon of $5\,\mathrm{s}$ and a control frequency of $10~\mathrm{Hz}$ resulting in a total of $405$ optimization variables (including slacks).
	For these experiments \ac{sciam} was implemented as a C++ ROS-module, the distance function was realized as distance field based on the code by \cite{Lau2013}. Initial guesses and reference paths were computed using an A* algorithm \cite{Hart1968}.

	Since GuSTO is not suitable for \acf{rhc}, we used an extended version of the elastic-band (EB) method \cite{Quinlan1993}. To obtain comparable results, we used the same A* planner and localization method with both algorithms.

	In Fig.~\ref{fig:sciam_example}, it can be seen that the \acfp{cfr} (transparent circles) keep to the center of the canyon-like free space.
	The predicted trajectory (blue line) is deformed to stay inside the \acp{cfr}. This is a predictive adaptation to the changed environment. For the shown, representative example in Fig.~\ref{fig:sciam_example} the robot passed smoothly the group.
	Comparable scenarios were solved similarly by EB.

	We observed that groups of people pose a particular challenge that could, however, be solved by both approaches.
	We note that the robot is moving a bit faster in proximity to people for EB, while
	CIAO adjusts to blocked paths a bit faster.
	The most significant difference between the methods is that \ac{sciam} combines rotation and backward/forward motion, while the EB rotates the robot on the spot.
	Both methods succeeded in steering the robot through the group safely, without a single collision.

	\begin{figure}
		\centering
			\vspace{-1.5em}
		\scalebox{0.5}{\input{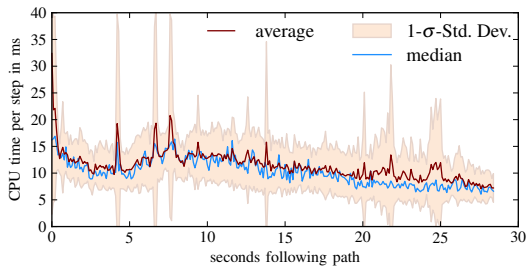}}
		\caption{\ac{nlp}-solver times obtained in a simulated environment including non-deterministically moving humans.
			A comparison of these computation times with the ones reported in \cite{Bonalli2019} indicates, that \ac{sciam} is computationally cheaper than GuSTO.
		}
		\label{fig:cfr_compute}
	\end{figure}
	Fig.~\ref{fig:cfr_compute} shows representative computation times obtained in simulation on a set of 14 scenarios involving non-deterministically moving virtual humans.
	The reported computation time accounts only for solving the \ac{sciam}-\ac{nlp} and function evaluations in CasADi, the processing time required by preprocessing steps and other components is not included.
	High computation times originate from far-from-optimal initializations occurring when a new goal is set, i.e. around time $t=0$, or if humans cross the planned path.
	\priority[0]{The computation time per \ac{nlp}-solver iteration is almost constant and approximately $1.1~\rm{ms}$ per iteration.}

In summary CIAO and the elastic band (EB) approach show similar behavior.
In contrast to EB, CIAO computes kinodynamically feasible and guaranteed continuous time
collision free trajectories. Further it has a notion of time for the planned
motion, such that predictions for dynamic environments can be incorporated in future work.

	\section{CONCLUSIONS AND FUTURE WORK} \label{sec:conclusion}

	This work proposes \ac{sciam}, a new framework for trajectory optimization,
	that is based on a novel constraint formulation, that allows for NMPC based collision avoidance in real-time.
	We show that it reaches or exceeds state of the art performance in
	trajectory optimization at significantly lower computational effort, scales
	to high dimensional systems, and that it can be used for \ac{rhc} style
	\ac{mpc} of mobile robots in dynamic environments.

	Future research will focus on extending \ac{sciam} to full body collision checking, guaranteed obstacle avoidance in dynamic environments, time optimal motion planning, and multi body robots.
	A second focus will lie on efficient numerical methods that exploit the structure and properties of the \ac{sciam}-NLP stated above.


  \balance
	\bibliographystyle{IEEEtran}
	\bibliography{syscop,localBib}


	\acrodef{cfr}[FB]{free ball}
	\acrodef{dp}[DP]{dynamic programming}
	\acrodef{ilqr}[iLQR]{iterative linear quadratic regulator}
	\acrodef{ip}[IP]{interior point}
	\acrodef{licq}[LICQ]{linear independence constraint qualification}
	\acrodef{mpc}[MPC]{model predictive control}
	\acrodef{nmpc}[NMPC]{nonlinear \ac{mpc}}
	\acrodef{nlp}[NLP]{nonlinear program}
	\acrodef{ocp}[OCP]{optimal control problem}
	\acrodef{qcqp}[QCQP]{quadratically constrained quadratic program}
	\acrodef{rhc}[RHC]{receding horizon control}
	\acrodef{rk4}[RK4]{Runge-Kutta method of 4\textsuperscript{th} order}
	\acrodef{ros}[ROS]{Robot Operating System}
	\acrodef{rti}[RTI]{real-time iteration}
	\acrodef{sam}[SAM]{sequential approximate method}
	\acrodef{sciam}[CIAO]{Convex Inner ApprOximation}
	\acrodef{scp}[SCP]{sequential convex programming}
	\acrodef{sqp}[SQP]{sequential quadratic programming}
	\acrodef{socp}[SOCP]{second order cone programming}

\end{document}